\providecommand{\ie}{\emph{i.e.,} }
\providecommand{\eg}{\emph{e.g.,} }
\newcommand{\ols}[1]{\mskip.5\thinmuskip\overline{\mskip-.5\thinmuskip {#1} \mskip-.5\thinmuskip}\mskip.5\thinmuskip} 
\newcommand{\olsi}[1]{\,\overline{\!{#1}}} 
\newcommand\closure[1]{
  \tctestifnum{\count@stringtoks{#1}>1} 
  {\ols{#1}} 
  {\olsi{#1}} 
}
\long\def\count@stringtoks#1{\tc@earg\count@toks{\string#1}}
\long\def\count@toks#1{\the\numexpr-1\count@@toks#1.\tc@endcnt}
\long\def\count@@toks#1#2\tc@endcnt{+1\tc@ifempty{#2}{\relax}{\count@@toks#2\tc@endcnt}}
\def\tc@ifempty#1{\tc@testxifx{\expandafter\relax\detokenize{#1}\relax}}
\long\def\tc@earg#1#2{\expandafter#1\expandafter{#2}}
\long\def\tctestifnum#1{\tctestifcon{\ifnum#1\relax}}
\long\def\tctestifcon#1{#1\expandafter\tc@exfirst\else\expandafter\tc@exsecond\fi}
\long\def\tc@testxifx{\tc@earg\tctestifx}
\long\def\tctestifx#1{\tctestifcon{\ifx#1}}
\long\def\tc@exfirst#1#2{#1}
\long\def\tc@exsecond#1#2{#2}
\newtheorem{claim}{\bf Claim}
\newtheorem{proposition}{\bf Proposition}
\newif\ifarxiv
\newif\ifreview
\begin{document}

\title{Explore until Confident: Efficient Exploration for Embodied Question Answering}

\ifarxiv
\author{
\authorblockN{Allen Z. Ren\authorrefmark{1}
 \ \ \ \ Jaden Clark\authorrefmark{2} \ \ \ \ Anushri Dixit\authorrefmark{1}\ \ \ \ Masha Itkina \authorrefmark{3}\ \ \ \ Anirudha Majumdar\authorrefmark{1}\ \ \ \ Dorsa Sadigh\authorrefmark{2}}
\authorblockA{
\authorrefmark{1}
Princeton University
\authorrefmark{2}
Stanford University
\authorrefmark{3}
Toyota Research Institute\\
\url{https://explore-eqa.github.io}
}}
\else
\author{Author Names Omitted for Anonymous Review. Paper-ID 229.}
\fi
\maketitle

\begin{abstract}

We consider the problem of \emph{Embodied Question Answering (EQA)}, which refers to settings where an embodied agent such as a robot needs to actively explore an environment to gather information until it is confident about the answer to a question. 
In this work, we leverage the strong semantic reasoning capabilities of large vision-language models (VLMs) to efficiently explore and answer such questions.
However, there are two main challenges when using VLMs in EQA: they do not have an internal memory for mapping the scene to be able to plan how to explore over time, and their confidence can be miscalibrated and can cause the robot to prematurely stop exploration or over-explore. We propose a method that first builds a semantic map of the scene based on depth information and via visual prompting of a VLM --- leveraging its vast knowledge of relevant regions of the scene for exploration. 
Next, we use conformal prediction to calibrate the VLM's question answering confidence, allowing the robot to know when to stop exploration --- leading to a more calibrated and efficient exploration strategy. To test our framework in simulation, we also contribute a new EQA dataset with diverse, realistic human-robot scenarios and scenes built upon the Habitat-Matterport 3D Research Dataset (HM3D). Both simulated and real robot experiments show our proposed approach improves the performance and efficiency over baselines that do no leverage VLM for exploration or do not calibrate its confidence.
\end{abstract}

\IEEEpeerreviewmaketitle


\section{Introduction}
\label{sec:introduction}

Imagine that a service robot is sent to a home to perform various tasks, and the household owner asks it to check whether the stove is turned off.
This setting is referred to as Embodied Question Answering (EQA)~\cite{das2018embodied, gordon2018iqa}, where the robot starts at a random location in a 3D scene, explores the space, and stops when it is confident about answering the question. This can be a challenging problem due to highly diverse scenes
and lack of an a-priori map of the environment. Previous works rely on training dedicated exploration policies and question answering modules from scratch, which often only consider simple and syntactically structured questions such as ``what is the color of the sofa'' \cite{yu2019multi, das2018neural, wijmans2019embodied}. The models studied in prior work largely consider synthetic scenes, and can be data-inefficient since the training is done from scratch.


\begin{figure*}[t]
\begin{center}
\includegraphics[width=1.0\textwidth]{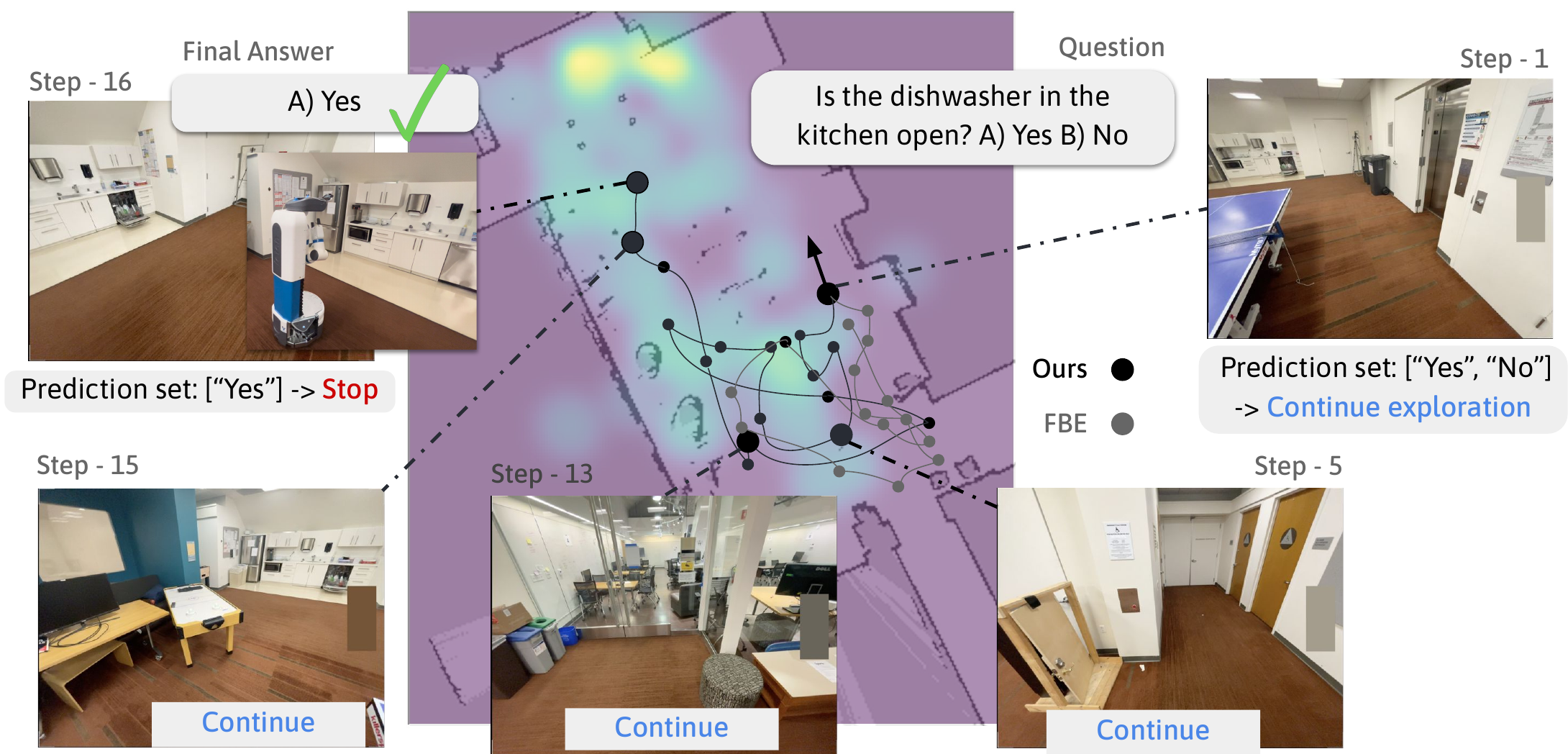}
\caption{Given a question about the scene (``Is the dishwasher in the kitchen open? A) Yes B) No''), our framework leverages a large vision-language model (VLM) to obtain semantic information from the views (visualized by overlaying it on top of the occupancy map), which guides a Fetch robot to explore relevant locations. Using such a semantic map helps robot explore more efficiently compared to Frontier-based exploration without using any semantic value (FBE, \cref{sec:experiments}). The robot maintains a set of possible answers and stops when the set reduces to a single answer based on the current view. In this example, the robot is confident at Step 16 where it sees the open dishwasher not too far from its position. The robot paths (thin lines) are approximated.}
\label{fig:anchor}
\vspace{-15pt}
\end{center}
\end{figure*}

Recently, large vision-language models (VLMs) have achieved impressive performance in answering complex questions about static 2D images that sometimes requires reasoning~\cite{li2023blip, liu2023visual}. They can also help the robot \emph{actively} perceive the 3D scene given partial 2D views and \emph{reason} about future actions for the robot to take~\cite{kwon2023toward}. Such capabilities are critical to performing EQA, as the robot can now better reason about relevant regions of the environment, actively explore them, and answer questions that require semantic reasoning (\eg answering ``what time is it now?'' by searching for a clock). However, 
there are two main challenges that arise in using VLMs for EQA in complex, diverse 3D scenes while trying to explore efficiently:
\begin{enumerate}[leftmargin=*]
    \item \textbf{Limited Internal Memory of VLMs.} Efficient exploration benefits from the robot tracking previously explored regions and also ones yet to be explored but \emph{relevant} for answering the question. However, VLMs do not have an internal memory for mapping the scene and storing such semantic information;
    \item \textbf{Miscalibrated VLMs}. VLMs are fine-tuned on pre-trained large language models (LLMs) as the language decoder, and LLMs have been shown to often be miscalibrated~\cite{kadavath2022language} -- that is they can be over-confident or under-confident about the output. This makes it difficult to determine when the robot is confident enough about question answering in EQA and then stop exploration, affecting overall efficiency.
\end{enumerate}

\emph{How can we endow VLMs -- with limited memory and potential for miscalibration -- with the capability of efficient exploration for EQA?} 
To address the first challenge, we construct a semantic map external to the VLM, combining the VLM's visual reasoning within the local view with the global geometric information of the map, and thus informing planning for the next waypoint.
To address the second challenge, we apply rigorous uncertainty quantification on the VLM's EQA predictions, such that the robot knows when it should stop to satisfy a certain level of prediction success. 

We propose a framework (\cref{fig:anchor}, \cref{fig:overview}) that leverages a VLM for answering open-ended questions in diverse 3D scenes by (1) fusing the commonsense/semantic reasoning abilities of a VLM into a global geometric map to enable efficient exploration, and (2)~using the theory of multi-step conformal prediction~\cite{vovk2005algorithmic,ren2023knowno} to formally quantify VLM uncertainty about the question. 
Through exploration, the robot builds a semantic map of the scene that stores information on occupancy and locations the VLM deems worth exploring. Such semantic information is obtained by annotating the free space in the current image view, prompting the VLM to choose among the unoccupied regions, and querying its prediction (\cref{fig:visual-prompting}). Heuristic planning is then applied to prioritize the robot exploring semantically relevant regions. Throughout an episode, the robot maintains a set of possible answers, updates the set at each step based on new visual information provided to the VLM, and stops exploration when the set of possible answers reduces to a single option. Conformal prediction formally ensures the set covers the true answer with high probability, and hence the robot can terminate exploration with calibrated confidence. Conformal prediction also minimizes the set size, and thus the robot can stop as soon as possible to avoid over-exploration.

For simulated experiments, we contribute HM-EQA, a new EQA dataset based on realistic human-robot scenarios and the Habitat-Matterport 3D Research Dataset (HM3D), which provides photo-realistic, diverse indoor 3D scans~\cite{ramakrishnan2021habitat}. We also perform hardware experiments in home/office-like environments using the Fetch mobile robot \cite{wise2016fetch}. Both simulated and hardware experiments show that our framework improves the EQA efficiency over baselines that do not use semantic information from VLM reasoning and do not calibrate the VLM for stopping criteria. 

\section{Related Work}
\label{sec:related-work}

Below we first introduce relevant work in the EQA problem setting, then other work that leverage VLM reasoning and semantic mapping, and lastly work that address the miscalibration issue especially using conformal prediction.

\smallskip \noindent \textbf{Embodied Question Answering.}
The EQA task (also known as IQA, Interactive Question Answering) was first introduced in 
\cite{das2018embodied, gordon2018iqa, yu2019multi, das2018neural, wijmans2019embodied} circa 2019, where the exploration policy and the question answering module are trained from scratch using data collected in largely synthetic indoor scenes. Due to the limited generalization capabilities of state-of-the-art models of the time, only simple questions regarding basic attributes of large objects (\eg color) were considered. 
Recently, a new line of work~\cite{ma2022sqa3d, hong20233dmv, hong20233dllm, huang2023embodied} leverages foundation models such as pre-trained LLMs or VLMs trained with 2D features sampled from 3D scenes to answer similar types of questions about the scene.
However, these recent works assume full information available (\eg full point cloud of the scene, or pre-sampled 2D images) for answering the question and do not consider the exploration problem.

\smallskip \noindent \textbf{VLMs for other embodied tasks.} VLMs have shown promise in solving other embodied tasks using their commonsense reasoning and knowledge~\cite{wen2023road, driess2023palm, sumers2023distilling, shen2023distilled}. VLMs have been fine-tuned to exhibit strong physical and spatial reasoning~\cite{chen2024spatialvlm, gao2023physically, brohan2023rt}. The closest work to ours include \cite{kwon2023toward}, where queries to a VLM and subsequently an LLM are used to help the robot reason about the most informative angle from which to view a scene, \eg the angle that helps determine whether a coffee cup is empty and needs to be cleaned up. Similarly \cite{shah2023navigation} uses an LLM to generate search heuristics guiding robot finding objects in unknown environments. Recently visual prompting with VLMs has been proposed to acquire VLM's prior knowledge about specific regions on 2D images \cite{nasiriany2024pivot, liu2024moka}, \eg searching for directions for reaching an object. In our work, we go beyond simple active exploration strategies generated by LLMs/VLMs, and carefully consider uncertainty estimation and active exploration while using the commonsense reasoning capabilities of VLMs to determine where to navigate in larger scenes. Previous work \cite{tian2022vibus} has also explored quantifying VLM uncertainty in semantic understanding, as well as leveraging VLM uncertainty for better anomaly detection in autonomous driving scenarios \cite{tian2023unsupervised}.

\smallskip \noindent \textbf{Semantic mapping with foundation models.} There is a large body of work in using foundation models to build a map containing information relevant to the task of interest (\ie semantic map \cite{chen2023not, dai2023think, kostavelis2016robot, jatavallabhula2023conceptfusion, gu2023conceptgraphs, chaplot2021seal}). NLMap~\cite{chen2023open}, VLMap~\cite{huang2023visual}, and CoW~\cite{gadre2022cow} encode image-text embeddings from models like CLIP~\cite{radford2021learning} or ViLD~\cite{gu2021open} into a spatial map, which is then used to find objects in the scene given a text query. LLMs are also employed to extract objects of interest from an open-ended query. CLIP-Fields~\cite{shafiullah2022clip} proposes a novel objective for mapping spatial locations to semantic embedding vectors using labels from these pre-trained models. However, these methods, when applied in navigation tasks, assume the semantic map is built offline, and the target location can be queried immediately given the text input. In our setting, the robot has no prior knowledge of the scene, and builds the semantic map online for guiding exploration. A closer work to ours is
\cite{zhou2023esc}, which uses a object grounding model to feed relevant information to LLM; LLM provide semantic information, which is then stored in a semantic map for guiding the exploration. Yokoyama et al.~\cite{yokoyama2023vlfm} uses the text-image relevance score from a VLM as the semantic information.
In contrast, our method directly leverages the probabilistic output of the VLM for semantic reasoning in a zero-shot manner without additional training or using another foundation model.

\smallskip \noindent \textbf{Calibration and conformal prediction.} With the rising popularity of LLMs (also VLMs fine-tuned from LLMs), many recent studies have investigated the possibility of them being miscalibrated~\cite{kadavath2022language, mielke2022reducing},
being over-confident or under-confident about their text outputs. 
This creates a unique challenge as these foundation models are increasingly applied in embodied tasks, and the agents can have miscalibrated confidence in their decisions, \eg when to stop exploration in EQA. To combat this, KnowNo~\cite{ren2023knowno} leverages conformal prediction (CP) to formally quantify the LLM's uncertainty in a robot planning setting 
, and ensures the robot plan is successful with calibrated confidence. In this work, we apply multi-step CP in EQA for determining when the VLM is confident about its answer in order to stop exploration in a multi-time-step setting (in contrast to KnowNo, which uses CP to decide when the robot should ask for help from a human). CP has also been utilized in deciding when to stop due to fault detection or unsafe behavior prediction \cite{lindemann2023conformal, pmlr-v211-dixit23a}. 

\section{Problem Formulation}
\label{sec:formulation}

\smallskip \noindent \textbf{Distribution of scenarios for EQA.} We formalize Embodied Question Answering (EQA) by considering an unknown joint distribution over \emph{scenarios} $\xi \sim \mathcal{D}$ the robot can encounter. A scenario is a tuple $\xi \coloneqq (e,T,g^0,q,y)$, where $e$ is a simulated or real 3D scene (\eg, a floor plan with certain dimensions), $T$ is the maximum number of time steps allowed for the robot to navigate in the scene (\eg a function of scene size), $g^0$ is the robot's initial pose (2D position and orientation at time 0),  $q$ is the question, and $y$ is the ground truth answer. We will use a subscript to indicate the scenario (\eg $T_\xi$ for the maximum time horizon in scenario $\xi$), and a superscript $t$ for time steps (\eg $g^t$ for the robot's pose at time $t$). 
In this work, we consider multiple-choice questions $q$, \eg ``Where did I leave the black suitcase? A) Bedroom B) Living room C) Storage room D) Dining room.'' We assume four choices for each question, and thus the set of labels $\mathcal{Y}:= \{\textrm`A\textrm', \textrm`B\textrm', \textrm`C\textrm', \textrm`D\textrm'\}$ contains any answer $y$.
We assume no knowledge of $\mathcal{D}$, except that we can sample a finite-size dataset of independent and identically distributed scenarios from it. 

\smallskip \noindent \textbf{Robot navigating in a scenario.} In this work, we would like a robot to be able to perform EQA in any given scenario~$\xi \in \mathcal{D}$. We do not expect the robot to have any prior knowledge of the scene. 
We initialize the robot at $g^\text{0}$, and at any time $t$ it can traverse to different poses $g^t$. The robot's onboard camera provides RGB images $I^t_c \in \mathbb{R}^{H_I \times W_I \times 3}$ and depth images $I^t_d \in \mathbb{R}^{H_I \times W_I}$. We associate a time step with each time the robot stops and takes RGB/depth images. Later we discuss how to select when and where the robot should take images --- for querying a VLM --- via an active exploration strategy. Additionally, we assume access to a collision-free planner $\pi$ that determines the next pose $g^{t+1}$ to travel to, a maximum of \SI{3}{\metre} away from $g^t$. We assume perfect odometry in simulation. In real-world settings, the robot can determine its new pose using a localization algorithm. 

\smallskip \noindent \textbf{VLM predictions.} A VLM pre-trained with large scale data provides information needed for solving the EQA task. We pass the RGB image and a text prompt $s$ to the model, and query its probability over predicting the next token. For convenience, we denote $x^t = (I^t_c, q)$ consisting of the RGB image $I^t_c$ and the question $q$. Then, the VLM's prediction given the question $q$ at time $t$ can be denoted as $\hat{f}(x^t) \in [0,1]^{|\mathcal{Y}|}$, which are the softmax scores over the multiple choice set $\mathcal{Y}$. We denote $\hat{f}_y(\cdot)$ as the softmax score for a particular label $y$. 


\smallskip \noindent \textbf{Goal: efficient exploration.} In a new scenario, the robot may stop at any time step $t \leq T_\xi$, and make a final answer to the question based on all the information (\eg VLM predictions over time steps). Our goal is to answer the question correctly in \emph{unseen} test scenarios $\xi \in \mathcal{D}$, using a minimal number of time steps. This requires the robot to search for relevant information efficiently without over-exploration.

\begin{figure}
\centering\includegraphics[width=0.98\linewidth]{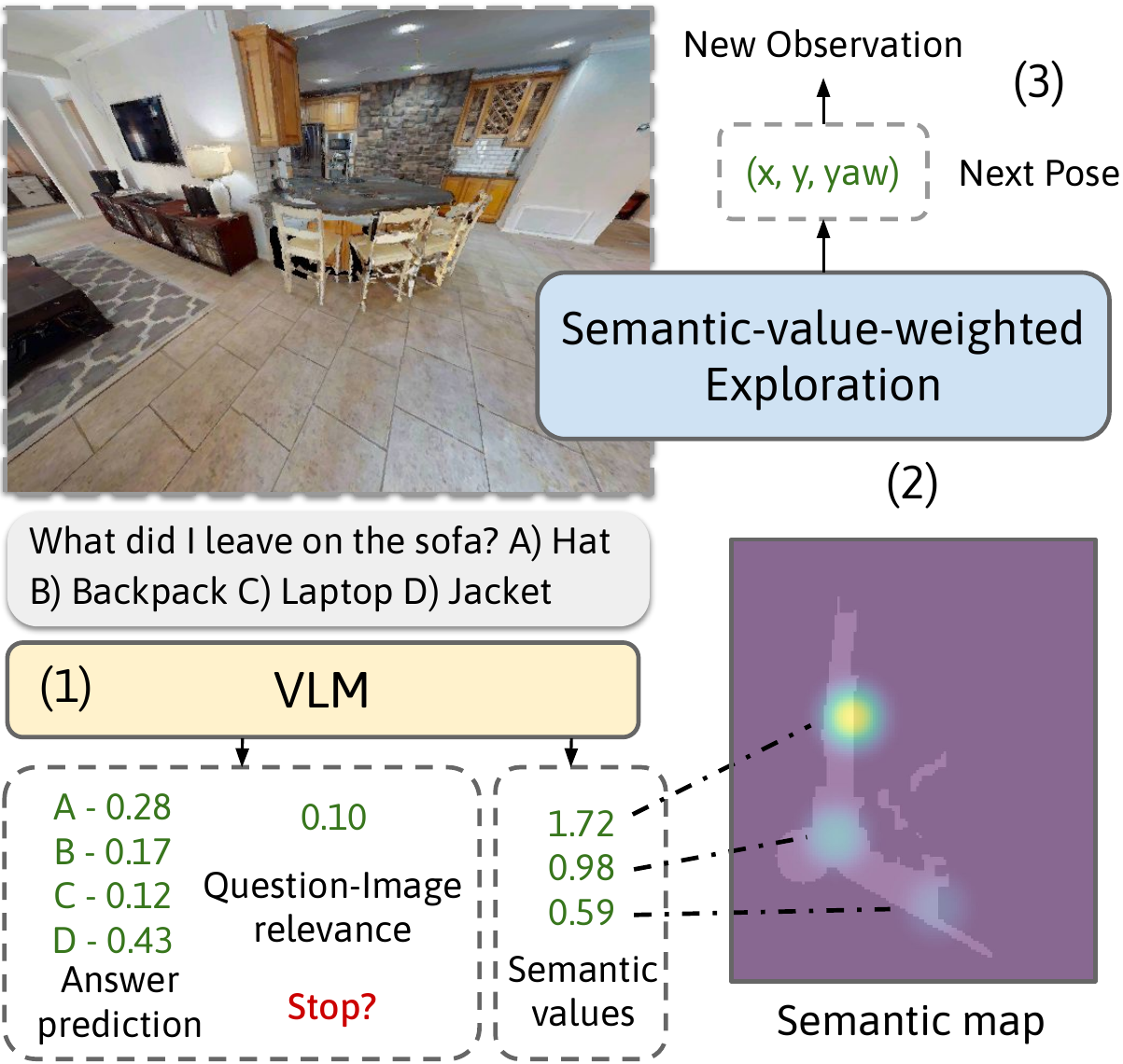}
    \caption{Overview of our framework for EQA tasks, which combines a VLM and a external semantic map for planning.}
    \label{fig:overview}
    \vspace{-10pt}
\end{figure}

\section{Targeted Exploration using VLM Reasoning}

To improve exploration efficiency, we would like the robot to prioritize exploring regions relevant to answering the posed question. To this end, we propose using the rich knowledge from VLMs to guide exploration. However, as discussed earlier, VLMs have limited internal memory --- they are unable to keep track of past and future relevant scenes. We instead propose a novel solution for building a map of the scene external to the VLM, and embedding the VLM's knowledge about possible exploration directions into this map to guide the robot's exploration.

\subsection{Overview}
\cref{fig:overview} provides an overview of the full framework. Given the observation and the question, we first prompt a VLM to generate three different outputs: answer prediction probabilities over the four possible answers, the question-image relevance score relating how relevant the current view is for answering the question (\cref{sec:stopping-criterion}), and a set of semantic values (\cref{subsec:semantic-value}) indicating if any regions in the view are worth exploring for answering the question. These values are then stored in a semantic map external to the VLM, which also tracks free space and unknown regions. Then we apply semantic-value-weighted exploration based on the map (\cref{subsec:semantic-fbe}) that guides the robot exploring meaningful regions. The robot stops until it is confident about answering the question based on the answer prediction and question-image relevance.


\subsection{Exploration Map and Frontier-Based Exploration}
\label{subsec:map}
For tracking where the robot has explored, we first adopt a 3D voxel-based representation for the map of size $L \times W \times H$ --- $M$ and $L$ expand as the robot explores more areas, and $H$ is fixed as $\SI{3.5}{\metre}$ (typical floor height). Each voxel corresponds to a cube with side length $l$. 
At each pose $g^t$ with depth image $I^t_d \in \mathbb{R}^{H_I \times W_I}$ and known camera intrinsics, we apply Volumetric Truncated Signed Distance Function (TSDF) Fusion~\cite{newcombe2011kinectfusion, zeng20163dmatch} to update (1) occupancy of the voxels and (2) if they are explored/seen in the current $I^t_d$. While all voxels seen in $I^t_d$ are used to update occupancy, only those within a smaller field of view are used to update whether they have been explored, enabling  more fine-grained exploration (App.~\cref{app:experiment-details}). At each time step we then project the 3D voxel map into a 2D point map $M$: a 2D point is considered free (un-occupied) if all voxels up until $\SI{1.5}{\metre}$ are marked free, which is the height of the camera (in simulation and in reality), and considered explored if all voxels along $H$ have been marked explored.


Based on the 2D map storing occupancy and exploration information, we use a heuristics-based 2D planner that plans new poses around unexplored region. Our strategy is based around Frontier-Based Exploration (FBE), which has been proven a simple yet effective method for navigation tasks~\cite{yamauchi1997frontier}. FBE finds the \emph{frontiers} (\cref{fig:visual-prompting}), the locations at the boundary of the explored and unexplored regions, samples one as the planned location, and uses the normal direction to the unexplored region boundary as the planned orientation. 



\begin{figure*}[t]
\begin{center}
\includegraphics[width=0.95\textwidth]{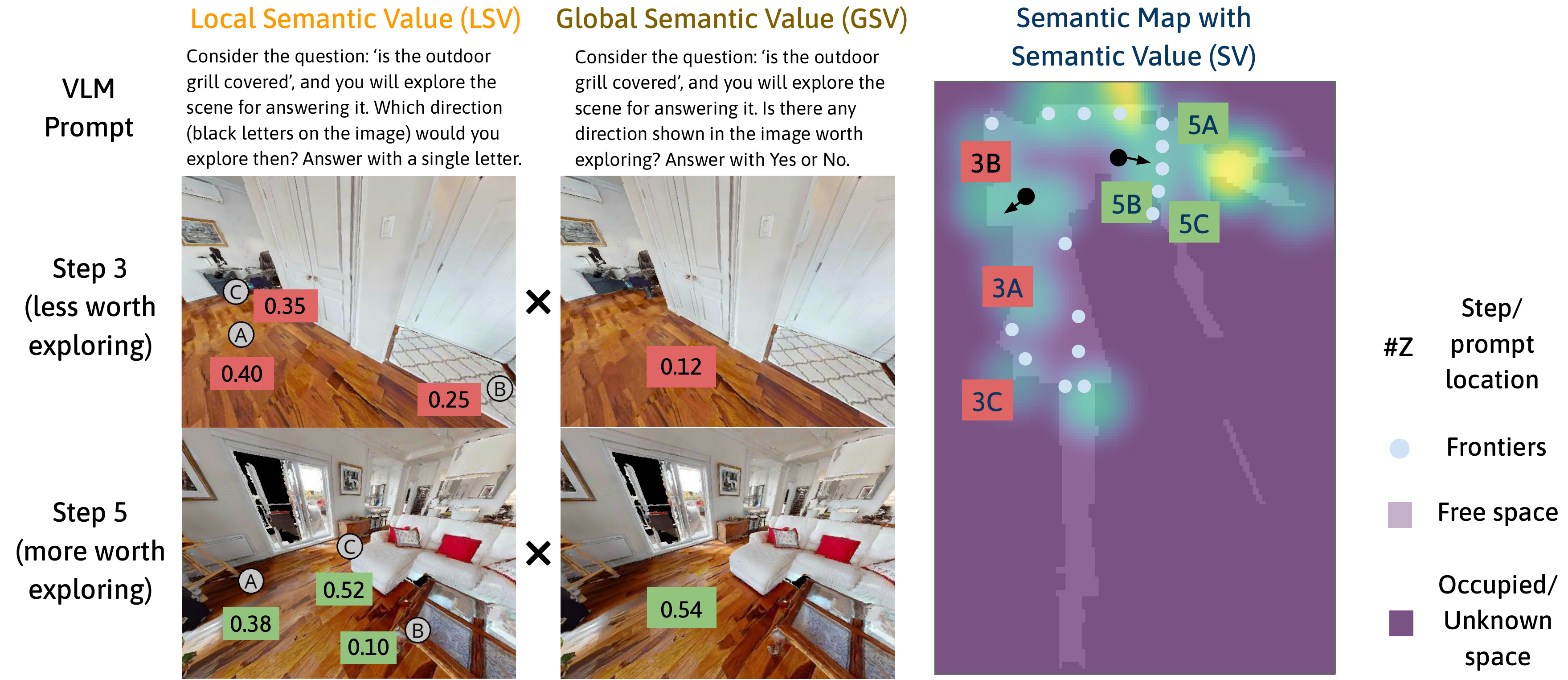}
\caption{To query VLM's uncertainty over possible exploration locations, we visually prompt the VLM with possible points in the current view (left column) and also with the entire view (middle column) to obtain the Local Semantic Value (LSV) and Global Semantic Value (GSV) (\cref{subsec:semantic-value}). A weighted combination of them (SV) is saved in a semantic map (right column). The values are used as the weights for sampling the next frontier to navigate to, guiding the robot towards \emph{unknown and relevant} regions (\cref{subsec:semantic-fbe})}
\label{fig:visual-prompting}
\vspace{-15pt}
\end{center}
\end{figure*}

\subsection{VLM Visual Prompting for Semantic Value (\cref{fig:visual-prompting})} 
\label{subsec:semantic-value}

As the VLM already has access to rich prior knowledge from large-scale Internet data, we hypothesize that it can potentially provide useful information in determining relevant locations to explore. We achieve this by obtaining the VLM's uncertainty over the possible locations via visual prompting. Given the current RGB image $I^t_c$, we first identify the free space seen in $I^t_c$ by (a) projecting it onto $M$, (b) keeping only the free points, and (c) sampling a set of points $P$ using farthest point sampling to ensure coverage. In practice, we use $| P | = 3$, which we find sufficient to cover the possible distinct regions in an image. Then, we de-project the sampled points back onto $I^t_c$ and annotate them with letters $\mathcal{Y}_P=\{\textrm`A\textrm', \textrm`B\textrm', \textrm`C\textrm'\}$ on $I^t_c$ to get an annotated image $I^t_{c, \mathcal{Y}_P}$, which can be used for visual prompting (\cref{fig:visual-prompting}) \cite{nasiriany2024pivot, liu2024moka}. Now, we have the following prompt:
\vspace{-8pt}
\begin{figure}[H]
\centering\includegraphics[width=0.98\linewidth]{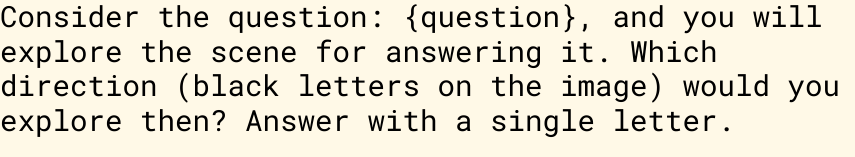}
\vspace{-15pt}
\end{figure}
We then use the (normalized) probability output of the VLM over each of the three directions to construct the \emph{local semantic value} (LSV) of $p \in P$:
\begin{equation}
    \text{LSV}_p(x^t) = \hat{f}_{y_\text{p}} (x^t) =  \hat{f}_{y_\text{p}}(I^t_c, s_{\text{LSV},q}) \in [0,1],
\end{equation}
where $x^t = (I^t_c, q)$ is the RGB image and question (cf. Sec.~\ref{sec:formulation}) and $s_{\text{LSV},q}$ is the prompt above with the question filled in. Note that this is a ``local'' score because the comparison is from one image, and the locations $P$ are not suited for being compared to those seen in images taken with different poses $g^t$ (\eg see top and bottom rows in \cref{fig:visual-prompting}) when planning the next robot pose using $M$. To address this issue, we additionally need to determine if we should navigate to poses visible from the current pose \emph{at all}. Similarly, we obtain the VLM's uncertainty 
via visual prompting:
\vspace{-8pt}
\begin{figure}[H]
\centering\includegraphics[width=0.98\linewidth]{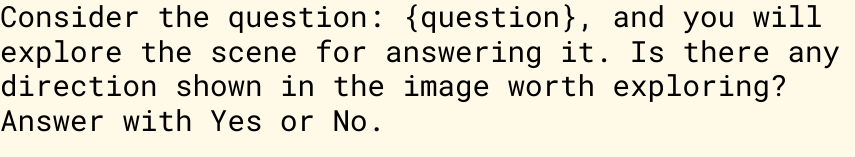}
\vspace{-15pt}
\end{figure}
This allows us to obtain the \emph{global semantic value} (GSV) of a given point $p \in P$ by querying the (normalized) probability of the VLM predicting `Yes':
\begin{equation}
    \text{GSV}_p(x^t) = \hat{f}_\text{`Yes'}(x^t) = \hat{f}_\text{`Yes'}(I^t_c, s_{\text{GSV},q})) \in [0,1],
\end{equation}
where again $s_{\text{GSV},q}$ is the prompt above with the question filled in. To determine the overall semantic value (SV), we apply temperature scaling ($\tau_\text{LSV}$ and $\tau_\text{GSV}$) to each of the two values and compute the following score:
\begin{equation}
        \text{SV}_p(x^t) = \exp \left(\tau_\text{LSV} \cdot \text{LSV}_p(x^t) + \tau_\text{GSV} \cdot \text{GSV}_p(x^t) \right).
\end{equation}
In practice, we apply Gaussian smoothing such that each value creates a Gaussian distribution around the point to better support the exploration strategy, which we explain next.

\subsection{Semantic-value-weighted Frontier Exploration}
\label{subsec:semantic-fbe}

Now we detail how to incorporate preferences in exploring high semantic-value regions using the semantic map --- we apply SV as the weights when sampling the next frontier to navigate to. Each weight are based on two values, $\text{SV}_p$, the semantic value at point $p$, and also $\text{SV}_{p, \text{Normal}}$, defined as the average semantic value of the points within a certain distance $d_\text{SV}$ from $p$ in the normal direction. $\text{SV}_{p, \text{Normal}}$ can be particularly useful to better guide the robot \emph{towards} the relevant regions if they are not close to robot's current pose. Gaussian smoothing around prompted points $P$ from \cref{subsec:semantic-value} also helps. We present the implementation details in App.~\cref{app:experiment-details}.

\section{Stopping Criterion for Exploration and Answering the Question}
\label{sec:stopping-criterion}

We have so far been using VLMs to guide the exploration for answering EQA. In the last section, we discussed how to address our first challenge of \emph{limited internal memory of VLMs} by building a semantic value weighted map and using it for efficient exploration. However, the second piece of efficient exploration is to know when you have enough information to answer the question and realize when you should stop exploring. 
This brings us to our second challenge of \emph{miscalibrated VLMs}, i.e., the fact that VLMs can be over-confident or under-confident about their answers.

Techniques for assessing VLM confidence in question answering typically rely on softmax scores. For example, one can compute the entropy of the predicted answer at each time step: 
\begin{equation}
\label{eq:entropy}
    H(\hat{f}(x^t)) = - \sum_{y \in \mathcal{Y}} \hat{f}_y(x^t) \log \hat{f}_y(x^t),
\end{equation}
and stop if this quantity is below a pre-defined threshold. 
Other techniques for assessing VLM confidence involve direct prompting\footnote{There is a subtle difference between this prompt and $s_{\text{GSV},q}$. This one is about \emph{answering the question} with the view, and $s_{\text{GSV},q}$ is for \emph{exploring possible directions} within the view.}:
\vspace{-8pt}
\begin{figure}[H]
\centering\includegraphics[width=0.98\linewidth]{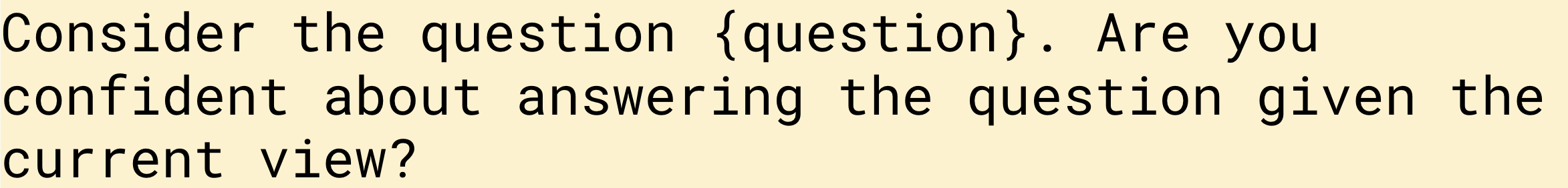}
\vspace{-15pt}
\end{figure}
We can then look at the probability of the model predicting `Yes' with this prompt; we refer to this as the \emph{question-image relevance score}:
\begin{equation}
\label{eq:relevance}
    \text{Rel}(x^t) = \hat{f}_\text{`Yes'}(I^t_c, (q,s_{\text{Rel},q})),
\end{equation}
where $s_{\text{Rel},q}$ is the prompt above with the question filled in. By normalizing this quantity with the sum of confidences of predicting `Yes' and `No', one obtains a scalar quantify bounded in $[0,1]$. A scalar threshold $h_\text{rel} \in [0,1]$ can then be used as the stopping criterion.


While these stopping criteria are simple to implement, relying on the raw softmax scores from the VLM faces a major challenge. The softmax scores from VLMs are often \emph{miscalibrated}, i.e., they are often over- or under-confident; this miscalibration is inherited from the underlying LLMs that are used to fine-tune VLMs~\cite{kadavath2022language}. Through experiments (\cref{sec:experiments}), we have found that the two options above with raw VLM softmax scores lead to the robot under-exploring or over-exploring in many scenarios. Also see App.~\cref{app:additional-results} for additional discussions on miscalibrated $\text{Rel}(x^t)$.

These observations motivate us to rigorously quantify the VLM's uncertainty and carefully calibrate the raw confidences. Our main insight is to employ multi-step conformal prediction, which allows the robot to maintain a \emph{set} of possible answers (\emph{prediction set}) over time, and stop when the set reduces to a single answer. Conformal prediction uses a moderately sized (e.g., $\sim$300) set of scenarios for carefully selecting a confidence threshold above which answers are included in the prediction set. This procedure allows us to achieve \emph{calibrated confidence}: with a user-specified probability, the prediction set is guaranteed to contain the correct answer for a new scenario (under the assumption that calibration and test scenarios are drawn from the same \emph{unknown} distribution $\mathcal{D}$). CP also minimizes the prediction set size \cite{vovk2005algorithmic, ren2023knowno}, which helps the robot to stop as quickly as it can while satisfying calibrated confidence.


\begin{figure}[h]
\centering\includegraphics[width=0.995\linewidth]{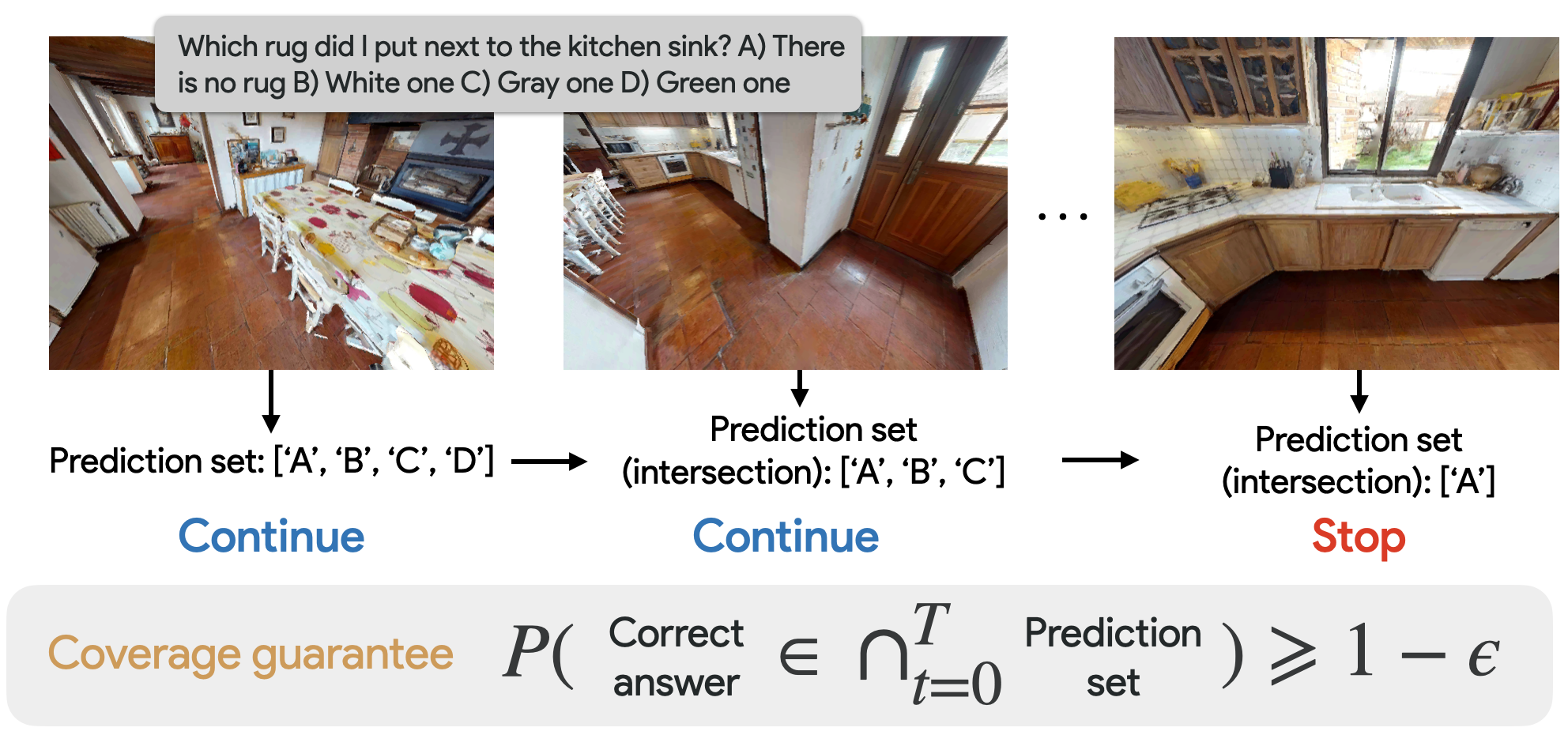}
    \vspace{-10pt}
    \caption{For determining when to stop, we apply a principled approach based on multiple-step conformal prediction: at each step, a prediction set is generated, and the robot keeps the intersection of the sets until there is only one option remaining. The correct answer is guaranteed to be the remaining one with user-specified probability.}
    \label{fig:cp}
    \vspace{-5pt}
\end{figure}

\subsection{Background: Conformal Prediction}
\label{sec:CP background}

We provide a brief overview of conformal prediction (CP) in this section; we refer the reader to~\cite{angelopoulos2023conformal} for a thorough exposition. We first describe a single-step setting where a VLM must answer a question pertaining to a \emph{single} image; we then describe CP in our multi-time-step active exploration setting in \cref{sec:multi-step CP}. 


Let $\mathcal{X}$ and $\mathcal{Y}$ denote the space of inputs (images and corresponding questions) and labels (answers) respectively, and let $\mathcal{D}$ denote an \emph{unknown} distribution over $\mathcal{Z}:= \mathcal{X} \times \mathcal{Y}$. Suppose we have collected a \emph{calibration dataset} $Z = \{z_i = (x_i, y_i)\}_{i=1}^N$ of such pairs drawn i.i.d. from $\mathcal{D}$. Now, given a new i.i.d. sample $z_\text{test} = (x_\text{test}, y_\text{test})$ with unknown true label $y_\text{test}$, CP generates a \emph{prediction set} $C(x_\text{test}) \subseteq \mathcal{Y}$ that contains $y_\text{test}$ with high probability \citep{vovk2005algorithmic}:
\begin{equation}
\label{eq:marginal_coverage}
    \mathbb{P}\big(y_\text{test} \in C(x_\text{test})\big) \geq 1-\epsilon.
\end{equation}
Here, $1-\epsilon$ is a user-defined threshold that impacts the size of $C(\cdot)$. 

CP provides this statistical guarantee on coverage by utilizing the dataset $Z$ to perform a calibration procedure with raw (heuristic) confidence scores. In our setting, we define the relevance-weighted confidence score for an input $x$ as:
\begin{equation}
\label{eq:rho}
\rho_y(x) := \text{Rel}(x)(\hat{f}_y(x)-1).
\end{equation}
This quantity is large when it is \emph{both} the case that the VLM is confident in the answer $y$ and the image is deemed highly relevant. CP utilizes these scores to evaluate the set of \emph{nonconformity scores} $\{\kappa_i = 1 - \rho_{y_i}(x_i)\}_{i=1}^N$ over the calibration set. Intuitively, the higher the nonconformity score is, the less confident the VLM is in the correct answer or the less relevant the image is deemed to be.
We then perform calibration by defining $\hat{q}$ to be the $\frac{\lceil (N+1)(1-\epsilon) \rceil}{N}$ empirical quantile of $\kappa_1, \dots, \kappa_N$. For a new input $x_\text{test}$, CP generates $C(x_\text{test}) = \{y \in \mathcal{Y} \ | \ \rho(x_\text{test})_y \geq 1-\hat{q})\}$, i.e., the prediction set that includes all labels in which the predictor has at least $1-\hat{q}$ relevance-weighted confidence. The generated prediction set ensures that the coverage guarantee in \cref{eq:marginal_coverage} holds.

\subsection{Applying Multi-Step CP for Embodied Question Answering}
\label{sec:multi-step CP}

Next, we describe how CP provides a \emph{principled} and more \emph{interpretable} stopping criterion for multi-step exploration by building on the multi-step CP approach presented in \cite{ren2023knowno}. In particular, we consider datapoints corresponding to episode-level \emph{sequences} of inputs. By performing calibration at the sequence level using a carefully chosen non-conformity score function, we can ensure that prediction sets can be constructed causally (i.e., in the order of time-step) at test time. 

Let $x^t$ denote the input at time $t$ consisting of the RGB image $I_c^t$ and the question $q$. Each episode results in a sequence $\bar{x} = (x^0, x^1, \dots)$ of such inputs. The distribution $\mathcal{D}$ over scenarios (cf. \cref{sec:formulation}) along with the exploration policy induces a distribution over input sequences $\bar{x}$. We first define the relevance-weighted confidence score at time $t$ (analogous to the single-step definition Eq. \eqref{eq:rho}):
\begin{equation}
\rho^t_y(x^t) := \text{Rel}(x^t)(\hat{f}_y(x^t)-1).
\end{equation}
This quantity is large when the input $x^t$ at time $t$ is deemed highly relevant and the VLM is confident in the answer $y$. We can then define the \emph{episode-level} confidence as:
\begin{equation}
\bar{\rho}_y(\bar{x}) := \min_{t \in [T]} \  \rho^t_y(x^t),
\end{equation}
where $T$ is the maximum allowable episode length.
Given a calibration dataset $Z = \{z_i = (\bar{x}_i, y_i)\}_{i=1}^N$ of input sequences (collected using the exploration policy) and ground-truth answers, we define the non-conformity score for data point $i$ as $\kappa_i := 1 - \bar{\rho}_{y_i}(\bar{x}_i)$. 

We can then perform the standard CP calibration as described in~\cref{sec:CP background} using these non-conformity scores in order to obtain a confidence threshold $\hat{q}$. Then, given a new input sequence $\bar{x}_\text{test}$, we can construct a \emph{sequence-level} prediction set $\bar{C}(\bar{x}_\text{test}) := \{y \in \mathcal{Y} | \bar{\rho}_y(\bar{x}_\text{test}) \geq 1 - \hat{q}\}$. This set is guaranteed to contain the ground-truth answer with probability $1-\epsilon$. 

However, at test-time, the robot does not obtain the entire sequence $\bar{x}_\text{test}$ at once; instead, the prediction sets must be \emph{causally} constructed over time (i.e., using observations up to the current time). Define the causally constructed prediction set at time $t$ to be:
\begin{equation}
C^t(x^t_\text{test}) := \{y \in \mathcal{Y} | \rho^t_y(x^t_\text{test}) \geq 1 - \hat{q}\}.
\end{equation}

\begin{claim}
    For all time $t \in [T]$, the causally constructed prediction set $C^t(x^t_\text{test})$ contains the sequence-level set $\bar{C}(\bar{x}_\text{test})$. Moreover, $\cap_{t=0}^T C^t(x^t_\text{test}) = \bar{C}(\bar{x}_\text{test})$. 
\end{claim}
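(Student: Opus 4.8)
The plan is to unpack the two definitions and reduce everything to a single elementary fact about minima. Recall $\bar{\rho}_y(\bar{x}_\text{test}) = \min_{t \in [T]} \rho^t_y(x^t_\text{test})$, and that membership in both $\bar{C}$ and $C^t$ is determined by comparing the relevant confidence score against the same threshold $1 - \hat{q}$. So the entire statement is a consequence of: $\min_t a_t \ge c$ if and only if $a_t \ge c$ for all $t$.

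First I would prove the containment $\bar{C}(\bar{x}_\text{test}) \subseteq C^t(x^t_\text{test})$ for each fixed $t$. Take $y \in \bar{C}(\bar{x}_\text{test})$; by definition $\bar{\rho}_y(\bar{x}_\text{test}) \ge 1 - \hat{q}$, i.e. $\min_{s \in [T]} \rho^s_y(x^s_\text{test}) \ge 1 - \hat{q}$. Since the minimum over $s$ is a lower bound on each individual term, in particular $\rho^t_y(x^t_\text{test}) \ge 1 - \hat{q}$, which is exactly the condition for $y \in C^t(x^t_\text{test})$. This holds for every $t \in [T]$, giving the first assertion.

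Next I would establish $\cap_{t=0}^T C^t(x^t_\text{test}) = \bar{C}(\bar{x}_\text{test})$ by double inclusion. The inclusion $\bar{C} \subseteq \cap_t C^t$ is immediate from the previous paragraph, since $\bar{C}$ sits inside each $C^t$. For the reverse, suppose $y \in \cap_{t=0}^T C^t(x^t_\text{test})$, so $\rho^t_y(x^t_\text{test}) \ge 1 - \hat{q}$ for every $t \in [T]$. Then $1 - \hat{q}$ is a lower bound for the finite set $\{\rho^t_y(x^t_\text{test})\}_{t \in [T]}$, hence it is at most the minimum: $\bar{\rho}_y(\bar{x}_\text{test}) = \min_{t} \rho^t_y(x^t_\text{test}) \ge 1 - \hat{q}$, so $y \in \bar{C}(\bar{x}_\text{test})$. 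Combining the two inclusions yields the claimed equality.

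Honestly, there is no real obstacle here — the result is essentially a restatement of how $\min$ interacts with a threshold, and the only thing to be careful about is bookkeeping over the index set (ensuring $[T]$ is used consistently in the definition of $\bar{\rho}$ and in the intersection, and that $T$ is finite so the minimum is attained). The conceptual content lives entirely in the \emph{choice} of the episode-level non-conformity score as a minimum over time; once that design choice is made, causal constructibility of the prediction sets, and the fact that intersecting them recovers the sequence-level guarantee from \cref{sec:CP background}, follow for free. I would close by remarking that this is precisely what makes the stopping rule valid: $\cap_{t=0}^T C^t = \bar{C}$ inherits the $1-\epsilon$ coverage guarantee, so stopping once $\cap_{s \le t} C^s$ is a singleton is sound.
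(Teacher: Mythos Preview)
Your proposal is correct and matches the paper's own proof: both reduce the claim to the elementary equivalence $\min_t a_t \ge c \iff a_t \ge c$ for all $t$, with the paper presenting this as a single chain of biconditionals and you splitting it into containment plus double inclusion. The mathematical content is identical.
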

\begin{proof}
    See App.~\cref{app:proof}.
\end{proof}

\begin{proposition}
    With probability $1-\epsilon$ for test scenarios drawn from $\mathcal{D}$, the ground-truth label $y_\text{test}$ is contained in the prediction set $\cap_{k=0}^t C^k(x^k_\text{test})$ for all $t \in [T]$. 
\end{proposition}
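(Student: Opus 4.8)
The plan is to reduce this causal, multi-step coverage statement to the single (sequence-level) conformal guarantee built in \cref{sec:multi-step CP}, and then to invoke the Claim to pass from the sequence-level prediction set $\bar C(\bar x_\text{test})$ to the running intersections $\bigcap_{k=0}^t C^k(x^k_\text{test})$. Concretely, I would first record that the calibration tuples $\{(\bar x_i,y_i)\}_{i=1}^N$ and the test tuple $(\bar x_\text{test},y_\text{test})$ are i.i.d.: the scenarios $\xi$ are i.i.d.\ from $\mathcal D$ and the exploration policy is a fixed (possibly stochastic) map, so the induced episode-level input sequences together with their ground-truth answers are i.i.d.\ draws from a common distribution. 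Since $\kappa_i = 1-\bar\rho_{y_i}(\bar x_i)$ and $\kappa_\text{test}=1-\bar\rho_{y_\text{test}}(\bar x_\text{test})$ are each the same fixed measurable function of the corresponding tuple, the scalars $\kappa_1,\dots,\kappa_N,\kappa_\text{test}$ are exchangeable. The standard split-conformal quantile lemma (as in \cref{sec:CP background}) with $\hat q$ equal to the $\frac{\lceil (N+1)(1-\epsilon)\rceil}{N}$ empirical quantile of $\kappa_1,\dots,\kappa_N$ then yields
\[
\mathbb P\big(\kappa_\text{test}\le \hat q\big)\ \ge\ 1-\epsilon .
\]
Unwinding definitions, $\kappa_\text{test}\le \hat q \iff \bar\rho_{y_\text{test}}(\bar x_\text{test})\ge 1-\hat q \iff y_\text{test}\in \bar C(\bar x_\text{test})$, so this is exactly $\mathbb P\big(y_\text{test}\in \bar C(\bar x_\text{test})\big)\ge 1-\epsilon$.

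Next I would convert this into the claimed causal statement using the Claim. By the Claim, $\bar C(\bar x_\text{test})\subseteq C^k(x^k_\text{test})$ for every $k\in[T]$, hence $\bar C(\bar x_\text{test})\subseteq \bigcap_{k=0}^t C^k(x^k_\text{test})$ for every $t$, and moreover $\bigcap_{k=0}^T C^k(x^k_\text{test}) = \bar C(\bar x_\text{test})$. Because the sets $\bigcap_{k=0}^t C^k(x^k_\text{test})$ are nested and nonincreasing in $t$, the event $\{\, y_\text{test}\in \bigcap_{k=0}^t C^k(x^k_\text{test})\ \text{for all } t\in[T]\,\}$ is identical to the event $\{\, y_\text{test}\in \bigcap_{k=0}^T C^k(x^k_\text{test})\,\} = \{\, y_\text{test}\in \bar C(\bar x_\text{test})\,\}$. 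Combining this with the probability bound above gives $\mathbb P\big(y_\text{test}\in \bigcap_{k=0}^t C^k(x^k_\text{test})\ \text{for all } t\big)\ge 1-\epsilon$, which is the Proposition. I would also note the usual caveats: the guarantee is marginal over the draw of the calibration set, and it is distribution-free in $\mathcal D$.

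The genuinely routine parts are the definitional equivalences and the nested-set bookkeeping; the step I expect to require the most care — the "main obstacle" — is justifying exchangeability of the non-conformity scores in the multi-step setting, i.e.\ arguing cleanly that, with the exploration policy held fixed, each per-episode tuple $(\bar x_i,y_i)$ is an i.i.d.\ draw so that applying the fixed functional $\bar\rho_y(\bar x)=\min_{t\in[T]}\rho^t_y(x^t)$ and forming $\kappa_i$ preserves exchangeability of the resulting scalars (including $\kappa_\text{test}$). Once that is established, the remainder is the standard conformal quantile argument plus the set inclusions already supplied by the Claim.
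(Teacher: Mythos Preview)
Your proposal is correct and takes essentially the same approach as the paper: invoke the sequence-level CP coverage guarantee for $\bar C(\bar x_\text{test})$ and then use the Claim's set inclusions to transfer coverage to the running intersections $\bigcap_{k=0}^t C^k(x^k_\text{test})$. The paper's proof is a two-sentence version of exactly this argument; you have simply spelled out the exchangeability justification and the nested-set bookkeeping that the paper leaves implicit.
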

\begin{proof}
    This follows directly from the claim above and the fact that the sequence-level prediction set $\bar{C}(\bar{x}_\text{test})$ contains the ground-truth label with user-defined probability $1-\epsilon$ as guaranteed by CP.
\end{proof}

At test time, we thus construct the set $C^t(x^t_\text{test})$ at each step and maintain the intersection of these sets over time. If the resulting intersection contains only a single element, the robot halts its exploration with $1-\epsilon$ confidence that the corresponding answer is correct \textcolor{blue}{(\cref{fig:cp})}. Alternately, if the maximum allowable time horizon $T$ is reached and the intersected set contains multiple answers, or the intersected set is empty, the robot returns the answer $y$ with highest $\hat{f}_y(x^t)$ from time $t$ with the highest $\text{Rel}(x^t)$.

\section{HM-EQA Dataset}
\label{sec:dataset}

\begin{figure}[h]
\centering\includegraphics[width=0.995\linewidth]{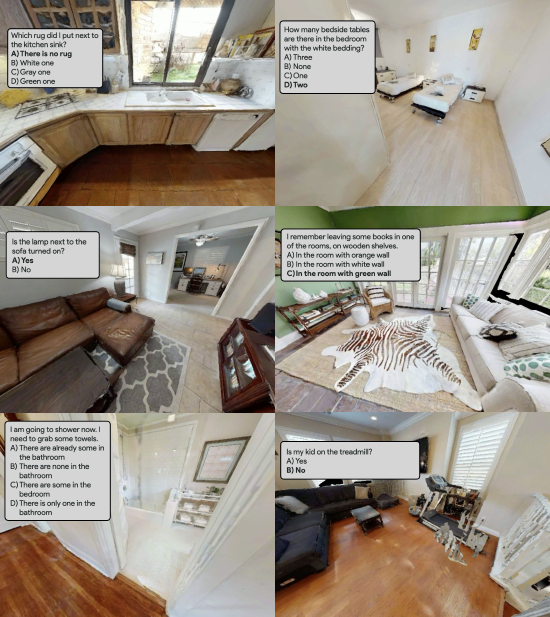}
    \vspace{-10pt}
    \caption{Sample scenarios from the HM-EQA dataset. The images are the views used by the robot to determine the final answer in our experiments. The boxes show the questions with the true answers bolded.}
    \label{fig:scenes}
    \vspace{-5pt}
\end{figure}

While prior work has primarily considered synthetic scenes and simple questions such as ``what is the color of the coffee table?'' or ``how many sofas are there in the living room?'' involving basic attributes of relatively large pieces of furniture, we are interested in applying our VLM-based framework in more realistic and diverse scenarios, where the question can be more open-ended and possibly require semantic reasoning. To this end, we propose HM-EQA, a new EQA dataset based on the Habitat-Matterport 3D Research Dataset (HM3D), which provides hundreds of photo-realistic, diverse indoor 3D scans~\cite{ramakrishnan2021habitat}. Sample scenes are shown in \cref{fig:scenes}.

In order to generate questions that are realistic in typical household settings, we leverage GPT4-V, the state-of-the-art VLM, to generate such questions based on twelve random views sampled inside an indoor scene from HM3D, and also three sets of example manually-written questions and answers given views of the corresponding scenes (one set per scene, see details in App.~\cref{app:experiment-details}). Afterwards we manually remove some of the questions that are (1) too simple (\eg ``How many sofas are there in the living room for them to sit on?") or (2) hallucinating objects that cannot be seen from the views by a human (\eg eyeglasses, watering can, and remote control). We consider option (1) to be too simple as it involves detection of very prominent objects in the scene (large in size). At the end, we generate 500 questions from 267 different scenes. The resulting questions can be roughly divided into five categories (also showing their split within the whole dataset):
\begin{enumerate} 
    \item \textbf{Identification (16.2$\boldsymbol\%$):} asking about identifying the type of an object, \eg ``Which tablecloth is on the dining table? A) Red B) White C) Black D) Gray''  
    \item \textbf{Counting (16.0$\boldsymbol\%$):} asking about the number of objects, \eg ``My friends and I were playing pool last night. Did we leave any cues on the table? A) None B) One C) Two D) Three''.   
    \item \textbf{Existence (21.6$\boldsymbol\%$):} asking if an object is present at a location, \eg ``Did I leave my jacket on the bench near the front door? A) Yes B) No''.      
    \item \textbf{State (20.2$\boldsymbol\%$):} asking about the state of an object, \eg ``Is the air conditioning in the living room turned on? A) Yes B) No'' or ``Is the curtain in the master bedroom closed? A) Yes B) No''. 
    \item \textbf{Location (26.0$\boldsymbol\%$):} asking about the location of an object, \eg ``Where have I left the black suitcase? A) At the corner of the bedroom B) In the hallway C) In the storage room D) Next to TV in the living room''.  
\end{enumerate}

Notice that some of the questions only involve two multiple choices, and our formulation in \cref{sec:formulation} assumes four. For consistency, if the question itself does not have four multiple choices, we add additional ones, \eg ``D) (Do not choose this option)'' until there are four.

Since the different scenes $e$ from HM3D can have very different sizes (majority of which range from \SI{100}{\metre\squared} to \SI{800}{\metre\squared}), we set the maximum allowed time steps $T_\epsilon$ in each scene to be the square root of the 2D size times a factor of three. The initial pose of the robot $g^0$ is sampled randomly from the free space in the scene. We have now fully defined the scenarios introduced in \cref{sec:formulation}, $\epsilon := (e, T, g^0, q, y)$ ($q$ for question and $y$ for answer).

\subsection{Extension: Answering Questions without Multiple Choices}
\label{seubsec: no-mc}
While the HM-EQA dataset contains multiple-choice questions instead of ones without multiple choices and we believe this is a reasonable design choice for benchmarking purposes, our framework can be also be extended to handle questions without multiple choice answers. We can apply the multiple-choice question answering (MCQA) setup in \cite{ren2023knowno} where an LLM (VLM in our case) first proposes different possible answers given the input (image observation and question in our case) based on few-shot examples, and then chooses one among them. To ensure coverage of the ground truth answer, we will add an additional option `An option not listed here' in case the VLM does not generate the correct one. This setup allows us to quantify VLM's uncertainty among different possible answers in \cref{sec:multi-step CP}. \cref{fig:fewshot} shows an example of GPT4-V generating possible answers to a question given the view.

\begin{figure}[h]
\centering\includegraphics[width=0.995\linewidth]{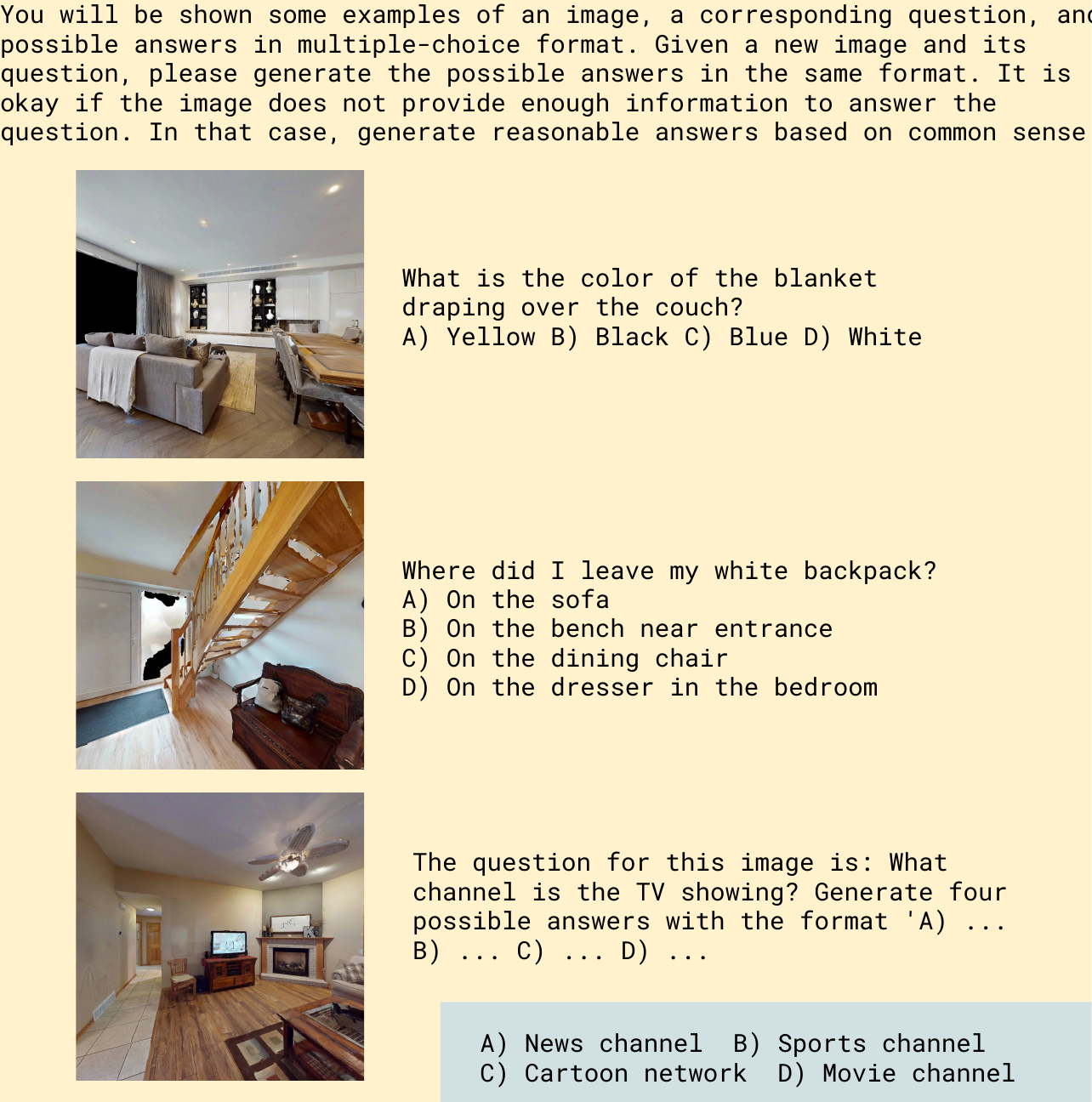}
    \vspace{-10pt}
    \caption{Prompt (yellow) and output (blue) of few-shot prompting GPT4-V to generate possible answers to the question given the view. The correct answer is `C) Cartoon network'. The actual prompt used has images encoded into a text form.}
    \label{fig:fewshot}
    \vspace{-5pt}
\end{figure}
\section{Experiments and Discussion}
\label{sec:experiments}

\begin{figure*}[t!]
\begin{center}
\includegraphics[width=0.95\textwidth]{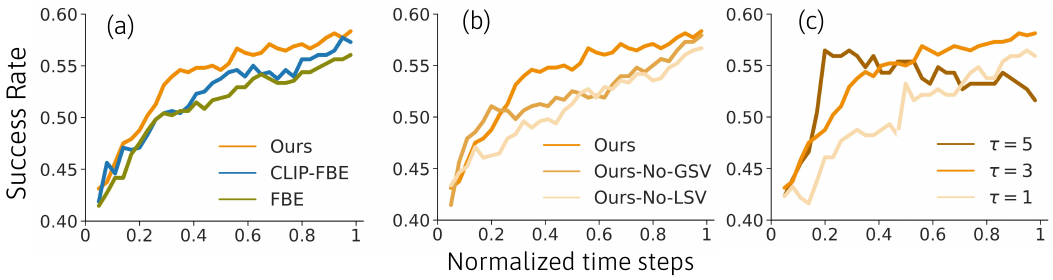}
\caption{Normalized time step taken vs. success rate in simulated experiments for (a) comparing different exploration methods, (b) ablating GSV and LSV from our method, and (c) varying semantic value temperature scaling. Ours improves exploration efficiency leveraging VLM reasoning.}
\label{fig:exploration-comparison}
\vspace{-15pt}
\end{center}
\end{figure*}

Through extensive simulated and hardware experiments, we investigate the following key questions:
\begin{itemize}[leftmargin=*]
    \item Q1 - \textbf{Semantic exploration:} Does our method use fewer steps while achieving the same level of EQA success compared to baselines without leveraging VLM reasoning for exploration?

    \item Q2 - \textbf{Stopping criteria:} Does our method with confidence calibration use fewer steps while achieving the same level of EQA success compared to baselines without calibration? 
\end{itemize}

\subsection{Implementation Details}
We use Prismatic VLM \cite{karamcheti2024prismatic}, a recently trained VLM that exhibits strong question answering and spatial reasoning capabilities among open-source models. Since our setup requires probability outputs from the VLM, we cannot use the state-of-the-art VLM, GPT4-V. For simulated experiments, we run the scenarios from HM-EQA in the Habitat simulator \cite{szot2021habitat}.

For hardware experiments, we deploy our framework with a Fetch mobile robot \cite{wise2016fetch} in six scenarios (three different home-office-like scenes (\cref{fig:fetch-comparison}) and two sets of question/answer for each scene). An iPhone 12 is mounted on top of the robot to acquire wide-angle RGB images. 

\subsection{Q1: Semantic Exploration - Baselines}
To evaluate how well our VLM-based semantic exploration performs, we consider the following baselines:
\begin{itemize}[leftmargin=*]
    \item \textbf{FBE}: This baseline applies frontier-based exploration without using any semantic values for weighted sampling.
    \item \textbf{CLIP-FBE}: This baseline considers semantic values when sampling the next frontier like ours. However, the semantic values are from CLIP \cite{radford2021learning}, which provides the relevance of an image given a text input. 
    CLIP-based methods have been very popular in tasks like zero-shot object navigation \cite{huang2023visual, gadre2022clip}. We apply the multi-scale relevancy extractor from \cite{ha2022semabs} to extract a dense pixel-wise CLIP score to allow fine-grained localization of relevant objects or exploration directions. We consider this a strong baseline.
    \item \textbf{Ours-No-LSV}: This baseline is the same as our method but without the Local Semantic Value (LSV). Only relying on the Global Semantic Value (GSV) means less fine-grained semantic values at the semantic map.
    \item \textbf{Ours-No-GSV}: This baseline is the same as our method except it does not use GSV. Only relying on LSV means that the planner could be myopic and may lead the robot to explore overall less relevant regions.
\end{itemize}

For evaluation, we vary the number of time steps (normalized by $T_\epsilon$ of each scenario, defined in \cref{sec:dataset}) allowed --- the robot does not need to determine when to stop in these experiments (we will consider the different stopping criteria in Q2; Sec.~\ref{sec:stopping criterion experiments}). For all methods including ours, in each scenario the robot stops at the maximum number of time steps allowed, and the final answer is chosen to be $y$ with highest $\hat{f}_y(x^t)$, at time $t$ with the highest $\text{Rel}(x^t)$. Also, since we do not use conformal prediction as a stopping criterion for these experiments, we do not need to split the data into calibration and test scenarios. We can thus use all 500 scenarios from the dataset for evaluation. 

\subsection{Q1: Semantic Exploration - Simulation Results}

\cref{fig:exploration-comparison}a shows the average success rate (answer correctly predicted) achieved when the robot is allowed to run for different numbers of time steps. Compared to FBE and CLIP-FBE, our method uses fewer steps to achieve success. The difference becomes significant around $20\%-30\%$ of the maximum allowable steps, demonstrating the effect of VLM-based active exploration at the early stage of the episodes. We also present the results for each of the five question categories (defined in \cref{sec:dataset}) in App.~\cref{app:additional-results}.

In \cref{fig:exploration-comparison}b, we demonstrate that both GSV and LSV are critical to achieving high success rates when using semantic values to guide exploration. Without GSV or LSV, the performance of our method is on par with FBE or even worse. Especially, without GSV, the plot shows the success rate is actually higher than ours with GSV at the early stage, but improves less afterwards --- this indicates the planner being myopic as it only considers LSV (obtained within a single view), and thus the robot being stuck in incorrectly chosen locations and unable to explore other locations. Without LSV, the robot explores less efficiently overall due to the less fine-grained semantic values.

In \cref{fig:exploration-comparison}c, we vary the temperature scaling $\tau_\text{LSV}$ and $\tau_\text{GSV}$ applied when determing the semantic values (SV) used for sampling the frontier. The higher the scalings are, the bigger the difference in SV among different regions, leading to a higher degree of semantic exploration. We set $\tau=1,3,5$ for both $\tau_\text{LSV}$ and $\tau_\text{GSV}$. Results show that too high $\tau$ leads to faster exploration at the beginning, but worse performance in later normalized time steps. This is potentially due to the robot overly prioritizing the semantic regions --- sometimes the VLM's reasoning can be less reliable. Too low $\tau$ also leads to inferior efficiency. Future work can explore adaptively determining the scaling within an episode to better balance semantic exploration and pure-frontier-based exploration.

We also would like to point out that the overall relatively low success rate (around $60\%$ with maximum time steps), is largely due to the wrong VLM answer prediction even when the robot sees the relevant object and the VLM deems the images relevant. 
We envision that these cases can be reduced with improved QA capabilities of VLMs in the near future. \cref{fig:fetch-comparison} also shows failure cases of the VLM question answering in hardware experiments (Scenario 5 and 6).

\subsection{Q1: Semantic Exploration - Comparing to CLIP-FBE} Although CLIP has shown great promises in helping robot find objects in zero-shot \cite{gadre2022clip} and CLIP-FBE shows improved exploration efficiency in \cref{fig:exploration-comparison}, it also has the drawback of (1)~behaving like bag-of-words \cite{yuksekgonul2022and}, and (2) thus offering limited semantic reasoning. \cref{fig:compare-clip} shows such an example: the question is ``Is the gray curtain pulled down in the bedroom?''. The current view shows a partial view of a living room and a door to the bedroom with a glimpse of the nightstand. When we prompt CLIP with the full question (bottom left), there is no visible attention on the nightstand but rather on the sofa. Even when we change the prompt to only the word ``Bedroom'', the attention is still minimal. Instead, when we prompt the VLM (top), it is able to offer exploring through the door for finding the bedroom. We believe the VLM's capacity for such semantic reasoning (\ie seeing the doors as a potential path to the bedroom, given that the current room is a living room) helps with exploration.

\begin{figure}[h]
\centering\includegraphics[width=0.995\linewidth]{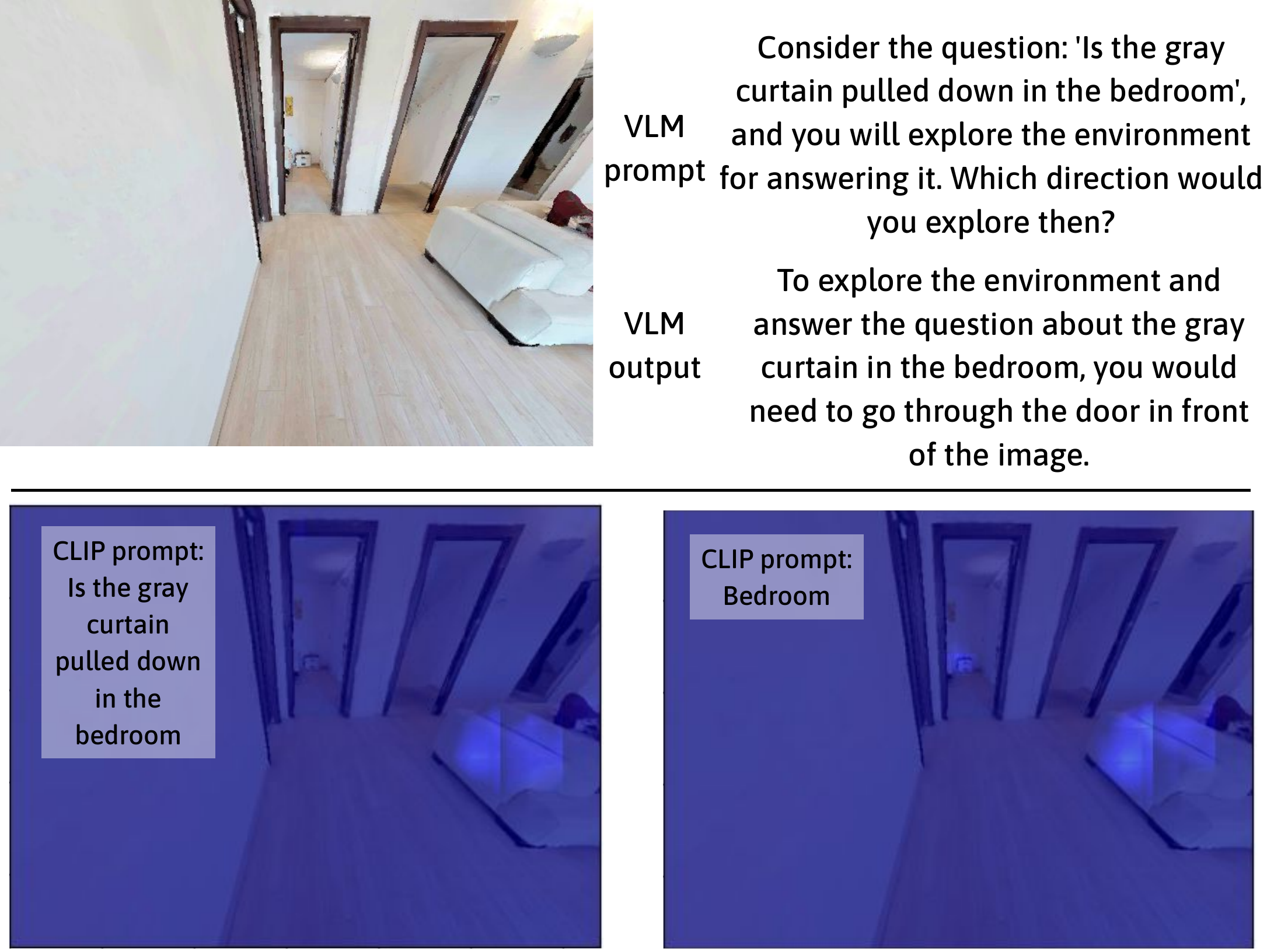}
    \vspace{-5pt}
    \caption{Top: Image, prompt, and output of the VLM. It can reason about exploring certain directions, in this case, through the door leading to the bedroom. Bottom: Multi-scale CLIP relevancy \cite{ha2022semabs} of the full image. CLIP may fail to offer exploration signals when the view shows little information about the place of interest.}
    \label{fig:compare-clip}
    \vspace{-12pt}
\end{figure}

Nonetheless, we find there are other cases where the VLM can be over-confident and leads the robot to over-prioritize a certain area that turns out to be irrelevant (leading to worse performance when a very high degree of semantic exploration is applied, as shown in \cref{fig:exploration-comparison}c). We hope future VLMs can be trained or fine-tuned to exhibit stronger exploration capabilities with calibrated confidence.

\subsection{Q1: Semantic Exploration - Fine-tuning VLM}
While the main experiments do not use fine-tuned VLMs — since the base performance of the VLM is not a contribution of this work — fine-tuning VLMs with in-domain examples can improve the results and alleviate reliance on performance of existing VLMs. In an additional experiment, we split the 500 scenarios into 300 for training and 200 for testing, and use the 300 training ones for fine-tuning the VLM specifically for question answering. We find, using the same Prismatic VLM, the success rate improves to 68.1\% from 56.2\%, indicating that fine-tuning on the dataset can better ground VLM’s question answering in the scenario distribution and EQA tasks.

\subsection{Q2: Stopping Criterion - Baselines}
\label{sec:stopping criterion experiments}
To evaluate how well our CP-based stopping criterion performs, we consider the following baselines:
\begin{itemize}[leftmargin=*]
    \item \textbf{Entropy}: This baseline uses the entropy of the predicted answer \eqref{eq:entropy} as the metric, and stops exploration once it is lower than some threshold (which can be varied). The final answer is the one with the highest $\hat{f}_y(x^t)$ at the stopping time step.
    \item \textbf{Relevance}: use the question-image relevance score \eqref{eq:relevance} as the metric, and stop exploration once it is higher than some threshold (which can be varied). The final answer is the one with the highest $\hat{f}_y(x^t)$ at the stopping time step.
\end{itemize}

For our method, we split the 500 scenarios from the dataset randomly into 300 for calibration and 200 for testing. We evaluate all methods using the same set of test scenarios.

\subsection{Q2: Stopping Criterion - Simulation Results}
For evaluation, we vary the different thresholds used by our method ($\epsilon$), Entropy, and Relevance. We then consider the normalized time step vs. the achieved success rate.

\cref{fig:stop-comparison} shows that our method significantly outperforms Entropy, which relies only on the answer prediction \cref{eq:entropy}. We find that, as the robot often sees irrelevant views (\eg facing an empty wall), the VLM still outputs highly confident, biased answers for the question. Such bias leads to low prediction entropy and the robot stops prematurely.

\begin{figure}[h]
\centering\includegraphics[width=0.75\linewidth]{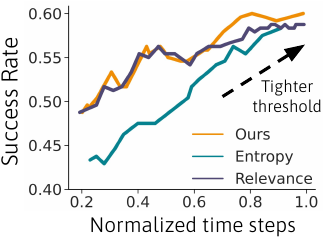}
    \vspace{-2pt}
    \caption{Normalized time step taken vs. success rate using varying threshold in Ours and the baselines, in simulated experiments. $\epsilon$ is varied from $0.27$ to $0.99$ (large value means higher confidence), question answering entropy threshold from $0.08$ to $0.001$ (smaller value means higher confidence), and question-image relevance threshold from $0.1$ to $0.99$ (large value means higher confidence).}
    \label{fig:stop-comparison}
    \vspace{-12pt}
\end{figure}

This observation leads to the necessity of using the question-answer relevance score \cref{eq:relevance}, which helps the robot ignore some of the irrelevant views and continue exploring. However, we find that, in order to achieve high success rates (upper right side of the plot), Relevance tends to use more time steps. For example, to achieve $58\%$ success rate, our method takes about 71$\%$ of the maximum time steps while Relevance takes 85$\%$. This indicates that without calibrating the question-relevance score and the answer prediction, the baselines may over-explore. In contrast, our method, based on the theory of multi-step conformal prediction \cref{sec:multi-step CP}, calibrates the VLM's confidence and consequently improves exploration efficiency. Our results also corroborate the findings from \cite{ren2023knowno} that CP offers the most efficiency improvement when the desired success rate is high. 


\subsection{Q2: Stopping Criterion - Hardware Results} We focus on comparing the stopping criterion performance in the hardware experiments. First we determine the threshold used for the stopping criterion: $\epsilon=0.5$ for the threshold used in CP calibration for our method, 0.4 for Relevance, and 0.1 for Entropy. These thresholds roughly corresponds to $50\%$ success rates based on simulated experiment results from \cref{fig:stop-comparison}.

\begin{figure*}[t!]
\begin{center}
\includegraphics[width=0.99\textwidth]{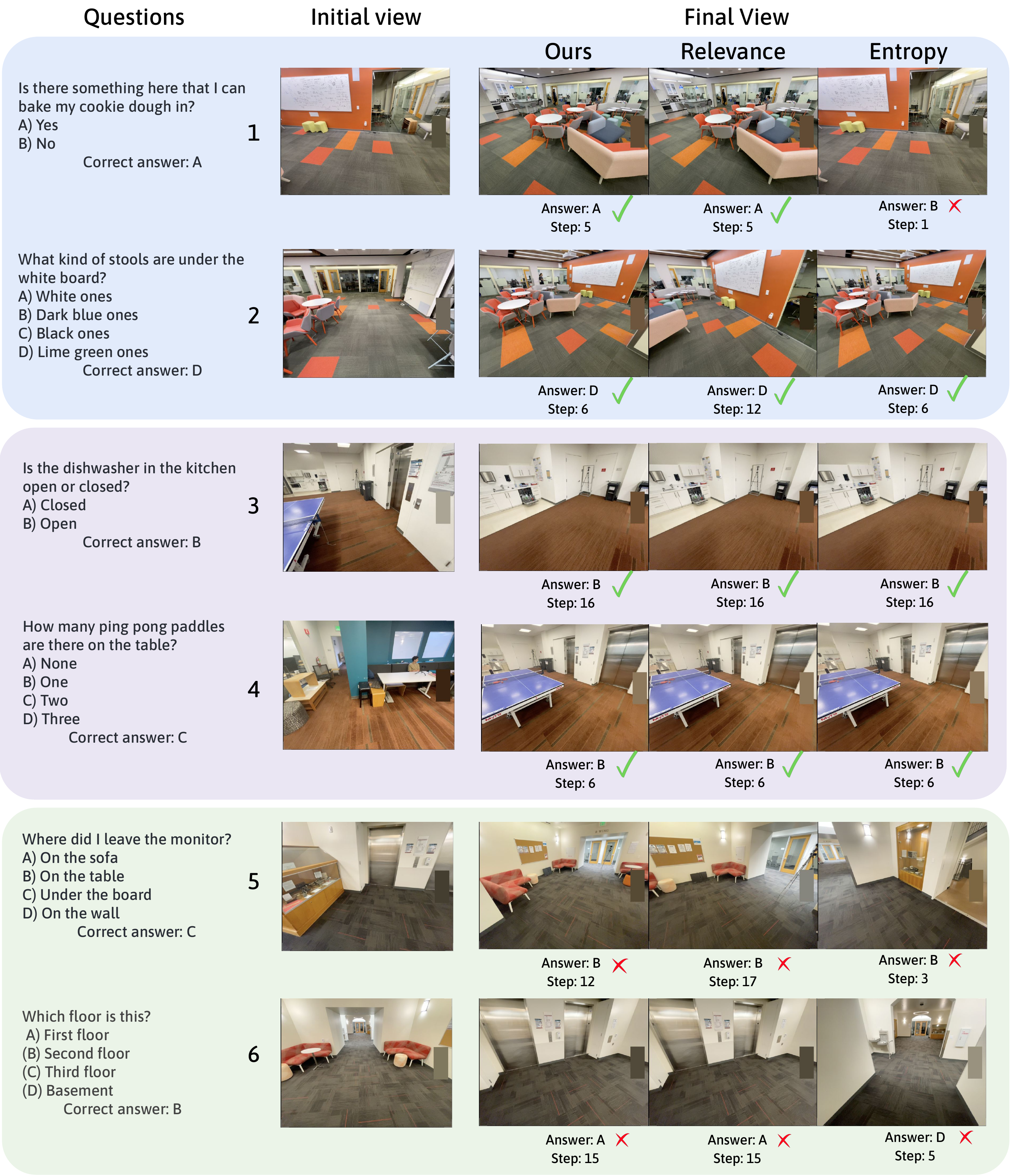}
\caption{Six scenarios considered in hardware experiments. We show the results for question answering and stopping steps for our method vs. the two baselines. Ours achieves the best success rate while using fewer steps to stop.}
\label{fig:fetch-comparison}
\vspace{-12pt}
\end{center}
\end{figure*}

\cref{fig:fetch-comparison} shows all six scenarios (questions/answers, and the initial robot views), the final views after the robot stops exploration using different methods (Ours, Relevance, and Entropy), the final answers chosen, and the number of steps taken at stopping (not normalized). In Scenario 3 and 4, all methods stop at the same time step and answer the question correctly. Looking at other scenarios, Entropy tends to stop early (\eg Step 1 in Scenario 1 and Step 3 in Scenario 5), but this leads to the failure in Scenario 1 where the other two methods answer correctly based on later views. Relevance achieves the same success rate (4 out of 6) as our method, but it uses more steps in Scenario 2 and Scenario 5, while our approach answers the question based on relevant views from previous steps --- in Scenario 2, Ours decides to stop after seeing the lime green stools at Step 6, and in Scenario 5, Ours decides to stop after seeing the monitor under the board at Step 12. Overall, our method achieves the best success rate (same as Relevance) and improves the efficiency.

We again note that the two failed scenarios, 5 and 6, are mostly due to the incorrect prediction of the VLM even when seeing the relevant views. In Scenario 5, the views from Ours and Relevance show the monitor under the board on the ground, and in Scenario 6, the views show the elevators, which have the sign ``2'' indicating the second floor. However, the VLM answer predictions are both wrong. 

\section{Conclusion}

To better solve Embodied Question Answering tasks, we propose a framework that leverages VLM's commonsense reasoning for exploring relevant locations, as well as calibrating its confidence such that the robot stops exploration at the right moment. Our extensive simulated and hardware experiments show that our method improves the exploration efficiency (achieving similar success levels using fewer steps). We believe that the performance of our method will keep improving with the advent of more powerful VLMs with stronger spatial and semantic reasoning while our method also complements them.

\textbf{Limitation and Future Work.} Our current setup does not consider the view orientation when considering the semantic value, which can be important for question answering and exploration. Our performance is also limited by the spatial reasoning capabilities of current open-source VLMs being inconsistent over scenarios.
One promising direction to address this is to collect such exploration data using simulated scenes (\eg with our HM-EQA dataset) or from egocentric human data (\eg Ego4D dataset \cite{grauman2022ego4d}), and train or fine-tune the VLM to predict possible exploration directions or reason about them. Another useful direction is to incorporate multiple views as input to the VLM --- keeping all views seen so far and effectively retrieving relevant ones for reasoning at the current step can be very useful. On the calibration side, one exciting direction is to use the prediction set from CP not only as the stopping criterion, but also for \emph{guiding exploration} --- \eg the set contains possible locations of an object to be found.


\ifarxiv
\section*{Acknowledgments}
We thank Donovon Jackson, Derick Seale, and Tony Nguyen for contributing to the HM-EQA dataset. The authors were partially supported by the Toyota Research Institute (TRI), the NSF CAREER Award [\#2044149], the Office of Naval Research [N00014-23-1-2148], the NSF Award [\#1941722], the ONR Award [\#N00014-22-1-2293], the DARPA grant [\#W911NF2210214], and Princeton SEAS Innovation Award from The Addy Fund for Excellence in Engineering. This article solely reflects the opinions and conclusions of its authors and NSF, ONR, DARPA, TRI or any other Toyota entity. This research was supported in part by Other Transaction award HR00112490375 from the U.S. Defense Advanced Research Projects Agency (DARPA) Friction for Accountability in Conversational Transactions (FACT) program.
\fi


\bibliographystyle{unsrtnat}
\bibliography{references}

\begin{thebibliography}{57}
\providecommand{\natexlab}[1]{#1}
\providecommand{\url}[1]{\texttt{#1}}
\expandafter\ifx\csname urlstyle\endcsname\relax
  \providecommand{\doi}[1]{doi: #1}\else
  \providecommand{\doi}{doi: \begingroup \urlstyle{rm}\Url}\fi

\bibitem[Das et~al.(2018{\natexlab{a}})Das, Datta, Gkioxari, Lee, Parikh, and Batra]{das2018embodied}
Abhishek Das, Samyak Datta, Georgia Gkioxari, Stefan Lee, Devi Parikh, and Dhruv Batra.
\newblock Embodied question answering.
\newblock In \emph{Computer Society Conference on Computer Vision and Pattern Recognition (CVPR)}, pages 1--10. IEEE, 2018{\natexlab{a}}.

\bibitem[Gordon et~al.(2018)Gordon, Kembhavi, Rastegari, Redmon, Fox, and Farhadi]{gordon2018iqa}
Daniel Gordon, Aniruddha Kembhavi, Mohammad Rastegari, Joseph Redmon, Dieter Fox, and Ali Farhadi.
\newblock Iqa: Visual question answering in interactive environments.
\newblock In \emph{Computer Society Conference on Computer Vision and Pattern Recognition (CVPR)}, pages 4089--4098. IEEE, 2018.

\bibitem[Yu et~al.(2019)Yu, Chen, Gkioxari, Bansal, Berg, and Batra]{yu2019multi}
Licheng Yu, Xinlei Chen, Georgia Gkioxari, Mohit Bansal, Tamara~L Berg, and Dhruv Batra.
\newblock Multi-target embodied question answering.
\newblock In \emph{Computer Society Conference on Computer Vision and Pattern Recognition (CVPR)}, pages 6309--6318. IEEE, 2019.

\bibitem[Das et~al.(2018{\natexlab{b}})Das, Gkioxari, Lee, Parikh, and Batra]{das2018neural}
Abhishek Das, Georgia Gkioxari, Stefan Lee, Devi Parikh, and Dhruv Batra.
\newblock Neural modular control for embodied question answering.
\newblock In \emph{Conference on Robot Learning (CoRL)}, pages 53--62. PMLR, 2018{\natexlab{b}}.

\bibitem[Wijmans et~al.(2019)Wijmans, Datta, Maksymets, Das, Gkioxari, Lee, Essa, Parikh, and Batra]{wijmans2019embodied}
Erik Wijmans, Samyak Datta, Oleksandr Maksymets, Abhishek Das, Georgia Gkioxari, Stefan Lee, Irfan Essa, Devi Parikh, and Dhruv Batra.
\newblock Embodied question answering in photorealistic environments with point cloud perception.
\newblock In \emph{Computer Society Conference on Computer Vision and Pattern Recognition (CVPR)}, pages 6659--6668. IEEE, 2019.

\bibitem[Li et~al.(2023)Li, Li, Savarese, and Hoi]{li2023blip}
Junnan Li, Dongxu Li, Silvio Savarese, and Steven Hoi.
\newblock Blip-2: Bootstrapping language-image pre-training with frozen image encoders and large language models.
\newblock \emph{arXiv preprint arXiv:2301.12597}, 2023.

\bibitem[Liu et~al.(2023)Liu, Li, Wu, and Lee]{liu2023visual}
Haotian Liu, Chunyuan Li, Qingyang Wu, and Yong~Jae Lee.
\newblock Visual instruction tuning.
\newblock \emph{arXiv preprint arXiv:2304.08485}, 2023.

\bibitem[Kwon et~al.(2023)Kwon, Hu, Myers, Karamcheti, Dragan, and Sadigh]{kwon2023toward}
Minae Kwon, Hengyuan Hu, Vivek Myers, Siddharth Karamcheti, Anca Dragan, and Dorsa Sadigh.
\newblock Toward grounded social reasoning.
\newblock \emph{arXiv preprint arXiv:2306.08651}, 2023.

\bibitem[Kadavath et~al.(2022)Kadavath, Conerly, Askell, Henighan, Drain, Perez, Schiefer, Hatfield-Dodds, DasSarma, Tran-Johnson, et~al.]{kadavath2022language}
Saurav Kadavath, Tom Conerly, Amanda Askell, Tom Henighan, Dawn Drain, Ethan Perez, Nicholas Schiefer, Zac Hatfield-Dodds, Nova DasSarma, Eli Tran-Johnson, et~al.
\newblock Language models (mostly) know what they know.
\newblock \emph{arXiv preprint arXiv:2207.05221}, 2022.

\bibitem[Vovk et~al.(2005)Vovk, Gammerman, and Shafer]{vovk2005algorithmic}
Vladimir Vovk, Alexander Gammerman, and Glenn Shafer.
\newblock \emph{{Algorithmic Learning in a Random World}}, volume~29.
\newblock Springer, 2005.

\bibitem[Ren et~al.(2023)Ren, Dixit, Bodrova, Singh, Tu, Brown, Xu, Takayama, Xia, Varley, et~al.]{ren2023knowno}
Allen~Z Ren, Anushri Dixit, Alexandra Bodrova, Sumeet Singh, Stephen Tu, Noah Brown, Peng Xu, Leila Takayama, Fei Xia, Jake Varley, et~al.
\newblock Robots that ask for help: Uncertainty alignment for large language model planners.
\newblock \emph{arXiv preprint arXiv:2307.01928}, 2023.

\bibitem[Ramakrishnan et~al.(2021)Ramakrishnan, Gokaslan, Wijmans, Maksymets, Clegg, Turner, Undersander, Galuba, Westbury, Chang, et~al.]{ramakrishnan2021habitat}
Santhosh~K Ramakrishnan, Aaron Gokaslan, Erik Wijmans, Oleksandr Maksymets, Alex Clegg, John Turner, Eric Undersander, Wojciech Galuba, Andrew Westbury, Angel~X Chang, et~al.
\newblock {Habitat-Matterport 3D} dataset ({HM3D}): 1000 large-scale {3D} environments for embodied {AI}.
\newblock \emph{arXiv preprint arXiv:2109.08238}, 2021.

\bibitem[Wise et~al.(2016)Wise, Ferguson, King, Diehr, and Dymesich]{wise2016fetch}
Melonee Wise, Michael Ferguson, Derek King, Eric Diehr, and David Dymesich.
\newblock Fetch and freight: Standard platforms for service robot applications.
\newblock In \emph{Workshop on autonomous mobile service robots}, pages 1--6, 2016.

\bibitem[Ma et~al.(2022)Ma, Yong, Zheng, Li, Liang, Zhu, and Huang]{ma2022sqa3d}
Xiaojian Ma, Silong Yong, Zilong Zheng, Qing Li, Yitao Liang, Song-Chun Zhu, and Siyuan Huang.
\newblock {SQA3D}: Situated question answering in {3D} scenes.
\newblock \emph{arXiv preprint arXiv:2210.07474}, 2022.

\bibitem[Hong et~al.(2023{\natexlab{a}})Hong, Lin, Du, Chen, Tenenbaum, and Gan]{hong20233dmv}
Yining Hong, Chunru Lin, Yilun Du, Zhenfang Chen, Joshua~B Tenenbaum, and Chuang Gan.
\newblock 3d concept learning and reasoning from multi-view images.
\newblock In \emph{Computer Society Conference on Computer Vision and Pattern Recognition (CVPR)}, pages 9202--9212. IEEE, 2023{\natexlab{a}}.

\bibitem[Hong et~al.(2023{\natexlab{b}})Hong, Zhen, Chen, Zheng, Du, Chen, and Gan]{hong20233dllm}
Yining Hong, Haoyu Zhen, Peihao Chen, Shuhong Zheng, Yilun Du, Zhenfang Chen, and Chuang Gan.
\newblock {3D-LLM}: Injecting the {3D} world into large language models.
\newblock \emph{arXiv preprint arXiv:2307.12981}, 2023{\natexlab{b}}.

\bibitem[Huang et~al.(2023{\natexlab{a}})Huang, Yong, Ma, Linghu, Li, Wang, Li, Zhu, Jia, and Huang]{huang2023embodied}
Jiangyong Huang, Silong Yong, Xiaojian Ma, Xiongkun Linghu, Puhao Li, Yan Wang, Qing Li, Song-Chun Zhu, Baoxiong Jia, and Siyuan Huang.
\newblock An embodied generalist agent in {3D} world.
\newblock \emph{arXiv preprint arXiv:2311.12871}, 2023{\natexlab{a}}.

\bibitem[Wen et~al.(2023)Wen, Yang, Fu, Wang, Cai, Li, Ma, Li, Xu, Shang, et~al.]{wen2023road}
Licheng Wen, Xuemeng Yang, Daocheng Fu, Xiaofeng Wang, Pinlong Cai, Xin Li, Tao Ma, Yingxuan Li, Linran Xu, Dengke Shang, et~al.
\newblock On the road with {GPT-4V} (ision): Early explorations of visual-language model on autonomous driving.
\newblock \emph{arXiv preprint arXiv:2311.05332}, 2023.

\bibitem[Driess et~al.(2023)Driess, Xia, Sajjadi, Lynch, Chowdhery, Ichter, Wahid, Tompson, Vuong, Yu, et~al.]{driess2023palm}
Danny Driess, Fei Xia, Mehdi~SM Sajjadi, Corey Lynch, Aakanksha Chowdhery, Brian Ichter, Ayzaan Wahid, Jonathan Tompson, Quan Vuong, Tianhe Yu, et~al.
\newblock Palm-e: An embodied multimodal language model.
\newblock \emph{arXiv preprint arXiv:2303.03378}, 2023.

\bibitem[Sumers et~al.(2023)Sumers, Marino, Ahuja, Fergus, and Dasgupta]{sumers2023distilling}
Theodore Sumers, Kenneth Marino, Arun Ahuja, Rob Fergus, and Ishita Dasgupta.
\newblock Distilling internet-scale vision-language models into embodied agents.
\newblock \emph{arXiv preprint arXiv:2301.12507}, 2023.

\bibitem[Shen et~al.(2023)Shen, Yang, Yu, Wong, Kaelbling, and Isola]{shen2023distilled}
William Shen, Ge~Yang, Alan Yu, Jansen Wong, Leslie~Pack Kaelbling, and Phillip Isola.
\newblock Distilled feature fields enable few-shot language-guided manipulation.
\newblock \emph{arXiv preprint arXiv:2308.07931}, 2023.

\bibitem[Chen et~al.(2024)Chen, Xu, Kirmani, Ichter, Driess, Florence, Sadigh, Guibas, and Xia]{chen2024spatialvlm}
Boyuan Chen, Zhuo Xu, Sean Kirmani, Brian Ichter, Danny Driess, Pete Florence, Dorsa Sadigh, Leonidas Guibas, and Fei Xia.
\newblock Spatialvlm: Endowing vision-language models with spatial reasoning capabilities.
\newblock \emph{arXiv preprint arXiv:2401.12168}, 2024.

\bibitem[Gao et~al.(2023)Gao, Sarkar, Xia, Xiao, Wu, Ichter, Majumdar, and Sadigh]{gao2023physically}
Jensen Gao, Bidipta Sarkar, Fei Xia, Ted Xiao, Jiajun Wu, Brian Ichter, Anirudha Majumdar, and Dorsa Sadigh.
\newblock Physically grounded vision-language models for robotic manipulation.
\newblock \emph{arXiv preprint arXiv:2309.02561}, 2023.

\bibitem[Brohan et~al.(2023)Brohan, Brown, Carbajal, Chebotar, Chen, Choromanski, Ding, Driess, Dubey, Finn, et~al.]{brohan2023rt}
Anthony Brohan, Noah Brown, Justice Carbajal, Yevgen Chebotar, Xi~Chen, Krzysztof Choromanski, Tianli Ding, Danny Driess, Avinava Dubey, Chelsea Finn, et~al.
\newblock {RT-2}: Vision-language-action models transfer web knowledge to robotic control.
\newblock \emph{arXiv preprint arXiv:2307.15818}, 2023.

\bibitem[Shah et~al.(2023)Shah, Equi, Osi{\'n}ski, Xia, Ichter, and Levine]{shah2023navigation}
Dhruv Shah, Michael~Robert Equi, B{\l}a{\.z}ej Osi{\'n}ski, Fei Xia, Brian Ichter, and Sergey Levine.
\newblock Navigation with large language models: Semantic guesswork as a heuristic for planning.
\newblock In \emph{Conference on Robot Learning}, pages 2683--2699. PMLR, 2023.

\bibitem[Nasiriany et~al.(2024)Nasiriany, Xia, Yu, Xiao, Liang, Dasgupta, Xie, Driess, Wahid, Xu, et~al.]{nasiriany2024pivot}
Soroush Nasiriany, Fei Xia, Wenhao Yu, Ted Xiao, Jacky Liang, Ishita Dasgupta, Annie Xie, Danny Driess, Ayzaan Wahid, Zhuo Xu, et~al.
\newblock Pivot: Iterative visual prompting elicits actionable knowledge for vlms.
\newblock \emph{arXiv preprint arXiv:2402.07872}, 2024.

\bibitem[Liu et~al.(2024)Liu, Fang, Abbeel, and Levine]{liu2024moka}
Fangchen Liu, Kuan Fang, Pieter Abbeel, and Sergey Levine.
\newblock Moka: Open-vocabulary robotic manipulation through mark-based visual prompting.
\newblock \emph{arXiv preprint arXiv:2403.03174}, 2024.

\bibitem[Tian et~al.(2022)Tian, Luo, Zhao, and Zhou]{tian2022vibus}
Beiwen Tian, Liyi Luo, Hao Zhao, and Guyue Zhou.
\newblock Vibus: Data-efficient 3d scene parsing with viewpoint bottleneck and uncertainty-spectrum modeling.
\newblock \emph{ISPRS Journal of Photogrammetry and Remote Sensing}, 194:\penalty0 302--318, 2022.

\bibitem[Tian et~al.(2023)Tian, Liu, Gao, Li, Zhao, and Zhou]{tian2023unsupervised}
Beiwen Tian, Mingdao Liu, Huan-ang Gao, Pengfei Li, Hao Zhao, and Guyue Zhou.
\newblock Unsupervised road anomaly detection with language anchors.
\newblock In \emph{2023 IEEE international conference on robotics and automation (ICRA)}, pages 7778--7785. IEEE, 2023.

\bibitem[Chen et~al.(2023{\natexlab{a}})Chen, Li, Kumar, Ghanem, and Yu]{chen2023not}
Junting Chen, Guohao Li, Suryansh Kumar, Bernard Ghanem, and Fisher Yu.
\newblock How to not train your dragon: Training-free embodied object goal navigation with semantic frontiers.
\newblock \emph{arXiv preprint arXiv:2305.16925}, 2023{\natexlab{a}}.

\bibitem[Dai et~al.(2023)Dai, Peng, Li, and Chai]{dai2023think}
Yinpei Dai, Run Peng, Sikai Li, and Joyce Chai.
\newblock Think, act, and ask: Open-world interactive personalized robot navigation.
\newblock \emph{arXiv preprint arXiv:2310.07968}, 2023.

\bibitem[Kostavelis et~al.(2016)Kostavelis, Charalampous, Gasteratos, and Tsotsos]{kostavelis2016robot}
Ioannis Kostavelis, Konstantinos Charalampous, Antonios Gasteratos, and John~K Tsotsos.
\newblock Robot navigation via spatial and temporal coherent semantic maps.
\newblock \emph{Engineering Applications of Artificial Intelligence}, 48:\penalty0 173--187, 2016.

\bibitem[Jatavallabhula et~al.(2023)Jatavallabhula, Kuwajerwala, Gu, Omama, Chen, Li, Iyer, Saryazdi, Keetha, Tewari, et~al.]{jatavallabhula2023conceptfusion}
Krishna~Murthy Jatavallabhula, Alihusein Kuwajerwala, Qiao Gu, Mohd Omama, Tao Chen, Shuang Li, Ganesh Iyer, Soroush Saryazdi, Nikhil Keetha, Ayush Tewari, et~al.
\newblock {ConceptFusion}: Open-set multimodal {3D} mapping.
\newblock \emph{arXiv preprint arXiv:2302.07241}, 2023.

\bibitem[Gu et~al.(2023)Gu, Kuwajerwala, Morin, Jatavallabhula, Sen, Agarwal, Rivera, Paul, Ellis, Chellappa, et~al.]{gu2023conceptgraphs}
Qiao Gu, Alihusein Kuwajerwala, Sacha Morin, Krishna~Murthy Jatavallabhula, Bipasha Sen, Aditya Agarwal, Corban Rivera, William Paul, Kirsty Ellis, Rama Chellappa, et~al.
\newblock Conceptgraphs: Open-vocabulary 3d scene graphs for perception and planning.
\newblock \emph{arXiv preprint arXiv:2309.16650}, 2023.

\bibitem[Chaplot et~al.(2021)Chaplot, Dalal, Gupta, Malik, and Salakhutdinov]{chaplot2021seal}
Devendra~Singh Chaplot, Murtaza Dalal, Saurabh Gupta, Jitendra Malik, and Russ~R Salakhutdinov.
\newblock Seal: Self-supervised embodied active learning using exploration and {3D} consistency.
\newblock \emph{Advances in Neural Information Processing Systems (NeurIPS)}, 34:\penalty0 13086--13098, 2021.

\bibitem[Chen et~al.(2023{\natexlab{b}})Chen, Xia, Ichter, Rao, Gopalakrishnan, Ryoo, Stone, and Kappler]{chen2023open}
Boyuan Chen, Fei Xia, Brian Ichter, Kanishka Rao, Keerthana Gopalakrishnan, Michael~S Ryoo, Austin Stone, and Daniel Kappler.
\newblock Open-vocabulary queryable scene representations for real world planning.
\newblock In \emph{International Conference on Robotics and Automation (ICRA)}, pages 11509--11522. IEEE, 2023{\natexlab{b}}.

\bibitem[Huang et~al.(2023{\natexlab{b}})Huang, Mees, Zeng, and Burgard]{huang2023visual}
Chenguang Huang, Oier Mees, Andy Zeng, and Wolfram Burgard.
\newblock Visual language maps for robot navigation.
\newblock In \emph{International Conference on Robotics and Automation (ICRA)}, pages 10608--10615. IEEE, 2023{\natexlab{b}}.

\bibitem[Gadre et~al.(2023)Gadre, Wortsman, Ilharco, Schmidt, and Song]{gadre2022cow}
Samir~Yitzhak Gadre, Mitchell Wortsman, Gabriel Ilharco, Ludwig Schmidt, and Shuran Song.
\newblock Cows on pasture: Baselines and benchmarks for language-driven zero-shot object navigation.
\newblock In \emph{Computer Society Conference on Computer Vision and Pattern Recognition (CVPR)}. IEEE, 2023.

\bibitem[Radford et~al.(2021)Radford, Kim, Hallacy, Ramesh, Goh, Agarwal, Sastry, Askell, Mishkin, Clark, et~al.]{radford2021learning}
Alec Radford, Jong~Wook Kim, Chris Hallacy, Aditya Ramesh, Gabriel Goh, Sandhini Agarwal, Girish Sastry, Amanda Askell, Pamela Mishkin, Jack Clark, et~al.
\newblock Learning transferable visual models from natural language supervision.
\newblock In \emph{International Conference on Machine Learning (ICML)}, pages 8748--8763. PMLR, 2021.

\bibitem[Gu et~al.(2021)Gu, Lin, Kuo, and Cui]{gu2021open}
Xiuye Gu, Tsung-Yi Lin, Weicheng Kuo, and Yin Cui.
\newblock Open-vocabulary object detection via vision and language knowledge distillation.
\newblock \emph{arXiv preprint arXiv:2104.13921}, 2021.

\bibitem[Shafiullah et~al.(2022)Shafiullah, Paxton, Pinto, Chintala, and Szlam]{shafiullah2022clip}
Nur Muhammad~Mahi Shafiullah, Chris Paxton, Lerrel Pinto, Soumith Chintala, and Arthur Szlam.
\newblock Clip-fields: Weakly supervised semantic fields for robotic memory.
\newblock \emph{arXiv preprint arXiv:2210.05663}, 2022.

\bibitem[Zhou et~al.(2023)Zhou, Zheng, Pryor, Shen, Jin, Getoor, and Wang]{zhou2023esc}
Kaiwen Zhou, Kaizhi Zheng, Connor Pryor, Yilin Shen, Hongxia Jin, Lise Getoor, and Xin~Eric Wang.
\newblock Esc: Exploration with soft commonsense constraints for zero-shot object navigation.
\newblock \emph{arXiv preprint arXiv:2301.13166}, 2023.

\bibitem[Yokoyama et~al.(2023)Yokoyama, Ha, Batra, Wang, and Bucher]{yokoyama2023vlfm}
Naoki Yokoyama, Sehoon Ha, Dhruv Batra, Jiuguang Wang, and Bernadette Bucher.
\newblock Vlfm: Vision-language frontier maps for zero-shot semantic navigation.
\newblock \emph{arXiv preprint arXiv:2312.03275}, 2023.

\bibitem[Mielke et~al.(2022)Mielke, Szlam, Dinan, and Boureau]{mielke2022reducing}
Sabrina~J Mielke, Arthur Szlam, Emily Dinan, and Y-Lan Boureau.
\newblock Reducing conversational agents’ overconfidence through linguistic calibration.
\newblock \emph{Transactions of the Association for Computational Linguistics}, 10:\penalty0 857--872, 2022.

\bibitem[Lindemann et~al.(2023)Lindemann, Qin, Deshmukh, and Pappas]{lindemann2023conformal}
Lars Lindemann, Xin Qin, Jyotirmoy~V Deshmukh, and George~J Pappas.
\newblock Conformal prediction for stl runtime verification.
\newblock In \emph{Proceedings of the ACM/IEEE 14th International Conference on Cyber-Physical Systems (with CPS-IoT Week 2023)}, pages 142--153, 2023.

\bibitem[Dixit et~al.(2023)Dixit, Lindemann, Wei, Cleaveland, Pappas, and Burdick]{pmlr-v211-dixit23a}
Anushri Dixit, Lars Lindemann, Skylar~X Wei, Matthew Cleaveland, George~J. Pappas, and Joel~W. Burdick.
\newblock Adaptive conformal prediction for motion planning among dynamic agents.
\newblock In Nikolai Matni, Manfred Morari, and George~J. Pappas, editors, \emph{Proceedings of The 5th Annual Learning for Dynamics and Control Conference}, volume 211, pages 300--314. PMLR, 15--16 Jun 2023.

\bibitem[Newcombe et~al.(2011)Newcombe, Izadi, Hilliges, Molyneaux, Kim, Davison, Kohi, Shotton, Hodges, and Fitzgibbon]{newcombe2011kinectfusion}
Richard~A Newcombe, Shahram Izadi, Otmar Hilliges, David Molyneaux, David Kim, Andrew~J Davison, Pushmeet Kohi, Jamie Shotton, Steve Hodges, and Andrew Fitzgibbon.
\newblock {KinectFusion}: Real-time dense surface mapping and tracking.
\newblock In \emph{International Symposium on Mixed and Augmented Reality}, pages 127--136. IEEE, 2011.

\bibitem[Zeng et~al.(2017)Zeng, Song, Nie{\ss}ner, Fisher, Xiao, and Funkhouser]{zeng20163dmatch}
Andy Zeng, Shuran Song, Matthias Nie{\ss}ner, Matthew Fisher, Jianxiong Xiao, and Thomas Funkhouser.
\newblock {3DMatch}: Learning local geometric descriptors from rgb-d reconstructions.
\newblock In \emph{Computer Society Conference on Computer Vision and Pattern Recognition (CVPR)}. IEEE, 2017.

\bibitem[Yamauchi(1997)]{yamauchi1997frontier}
Brian Yamauchi.
\newblock A frontier-based approach for autonomous exploration.
\newblock In \emph{Proceedings 1997 IEEE International Symposium on Computational Intelligence in Robotics and Automation CIRA'97.'Towards New Computational Principles for Robotics and Automation'}, pages 146--151. IEEE, 1997.

\bibitem[Angelopoulos et~al.(2023)Angelopoulos, Bates, et~al.]{angelopoulos2023conformal}
Anastasios~N Angelopoulos, Stephen Bates, et~al.
\newblock Conformal prediction: A gentle introduction.
\newblock \emph{Foundations and Trends{\textregistered} in Machine Learning}, 16\penalty0 (4):\penalty0 494--591, 2023.

\bibitem[Karamcheti et~al.(2024)Karamcheti, Nair, Balakrishna, Liang, Kollar, and Sadigh]{karamcheti2024prismatic}
Siddharth Karamcheti, Suraj Nair, Ashwin Balakrishna, Percy Liang, Thomas Kollar, and Dorsa Sadigh.
\newblock Prismatic vlms: Investigating the design space of visually-conditioned language models.
\newblock \emph{arXiv preprint arXiv:2402.07865}, 2024.

\bibitem[Szot et~al.(2021)Szot, Clegg, Undersander, Wijmans, Zhao, Turner, Maestre, Mukadam, Chaplot, Maksymets, Gokaslan, Vondrus, Dharur, Meier, Galuba, Chang, Kira, Koltun, Malik, Savva, and Batra]{szot2021habitat}
Andrew Szot, Alex Clegg, Eric Undersander, Erik Wijmans, Yili Zhao, John Turner, Noah Maestre, Mustafa Mukadam, Devendra Chaplot, Oleksandr Maksymets, Aaron Gokaslan, Vladimir Vondrus, Sameer Dharur, Franziska Meier, Wojciech Galuba, Angel Chang, Zsolt Kira, Vladlen Koltun, Jitendra Malik, Manolis Savva, and Dhruv Batra.
\newblock Habitat 2.0: Training home assistants to rearrange their habitat.
\newblock In \emph{Advances in Neural Information Processing Systems (NeurIPS)}, 2021.

\bibitem[Gadre et~al.(2022)Gadre, Wortsman, Ilharco, Schmidt, and Song]{gadre2022clip}
Samir~Yitzhak Gadre, Mitchell Wortsman, Gabriel Ilharco, Ludwig Schmidt, and Shuran Song.
\newblock Clip on wheels: Zero-shot object navigation as object localization and exploration.
\newblock \emph{arXiv preprint arXiv:2203.10421}, 3\penalty0 (4):\penalty0 7, 2022.

\bibitem[Ha and Song(2022)]{ha2022semabs}
Huy Ha and Shuran Song.
\newblock Semantic abstraction: Open-world 3{D} scene understanding from 2{D} vision-language models.
\newblock In \emph{Conference on Robot Learning (CoRL)}. PMLR, 2022.

\bibitem[Yuksekgonul et~al.(2022)Yuksekgonul, Bianchi, Kalluri, Jurafsky, and Zou]{yuksekgonul2022and}
Mert Yuksekgonul, Federico Bianchi, Pratyusha Kalluri, Dan Jurafsky, and James Zou.
\newblock When and why vision-language models behave like bags-of-words, and what to do about it?
\newblock In \emph{The Eleventh International Conference on Learning Representations}, 2022.

\bibitem[Grauman et~al.(2022)Grauman, Westbury, Byrne, Chavis, Furnari, Girdhar, Hamburger, Jiang, Liu, Liu, et~al.]{grauman2022ego4d}
Kristen Grauman, Andrew Westbury, Eugene Byrne, Zachary Chavis, Antonino Furnari, Rohit Girdhar, Jackson Hamburger, Hao Jiang, Miao Liu, Xingyu Liu, et~al.
\newblock Ego4d: Around the world in 3,000 hours of egocentric video.
\newblock In \emph{Proceedings of the IEEE/CVF Conference on Computer Vision and Pattern Recognition}, pages 18995--19012, 2022.

\bibitem[Zeng et~al.(2020)Zeng, Song, Lee, Rodriguez, and Funkhouser]{zeng2020tossingbot}
Andy Zeng, Shuran Song, Johnny Lee, Alberto Rodriguez, and Thomas Funkhouser.
\newblock Tossingbot: Learning to throw arbitrary objects with residual physics.
\newblock \emph{IEEE Transactions on Robotics}, 36\penalty0 (4):\penalty0 1307--1319, 2020.

\end{thebibliography}


\renewcommand{\thetable}{A\arabic{table}}
\renewcommand{\theequation}{A\arabic{equation}}
\renewcommand\thefigure{A\arabic{figure}}
\renewcommand{\thesubsection}{A\arabic{subsection}}
\setcounter{figure}{0}
\setcounter{table}{0}
\setcounter{equation}{0}


\clearpage
\begin{appendices}
\section{Proof of Claim 1}
\label{app:proof}
\begin{proof}
The proof follows \citet[Claim 1]{ren2023knowno}. Specifically, 
\begin{align}
    y \in \bar{C}(\bar{x}_\text{test})
    & \iff \bar{\rho}_y(\bar{x}_\text{test}) \geq 1 - \hat{q} \\
    & \iff \min_{t \in [T]} \  \rho^t_y(x^t_\text{test}) \geq 1 - \hat{q} \\
    & \iff \rho^t_y(x^t_\text{test}) \geq 1 - \hat{q}, \ \forall t \in [T] \\
    & \iff y \in C^t(x^t_\text{test}), \ \forall t \in [T] \\
    & \iff y \in \cap_{t=0}^T \ C^t(x^t_\text{test}).
\end{align}
\end{proof}

\section{Additional implementation details}
\label{app:experiment-details}

\subsection{Semantic map}

\smallskip \noindent \textbf{Updating the explored regions.} As introduced in  \cref{subsec:map}, while all voxels seen in the depth image $I^t_d$ are used to update occupancy at each step, only those within a smaller field of view are used to update whether they have been explored, enabling  more fine-grained exploration. In practice, we use a $4:3$ aspect ratio for the images, and $120$ degrees for the horizontal field of view (HFOV) and $105$ degrees for the vertical field of view (VFOV) in simulation. Then we mark voxels explored if they correspond to pixels from the middle $50\%$ of the full HFOV and the lower $50\%$ of the full VFOV, as these pixels correspond to voxels closer to the robot.

\smallskip \noindent \textbf{Determining the weights of frontiers based on semantic values.} From \cref{subsec:semantic-value}, when the robot plans for the next pose to travel to, it samples from possible frontiers with weights, and the weight of each frontier depends on (1) $\text{SV}_p$, the semantic value at the frontier $p$, and (2) $\text{SV}_{p,\text{normal}}$, the average semantic value of the points with $d_\text{SV}$ distance from $p$ in the normal direction. \cref{fig:semantic-value} illustrates the setup. We set $d_\text{SV} = 3\si{m}$. Notice that the frontier near the top of the figure has a slightly higher $\text{SV}$, while the middle frontiers have much higher $\text{SV}_{p,\text{normal}}$ due to the high semantic value region at about $2\si{m}$ away from the frontiers into the un-explored regions. Balancing between $\text{SV}$ and $\text{SV}_{p,\text{normal}}$ as the sampling weights helps the robot explore and move towards relevant regions.

\begin{figure}[h]
\centering\includegraphics[width=0.99\linewidth]{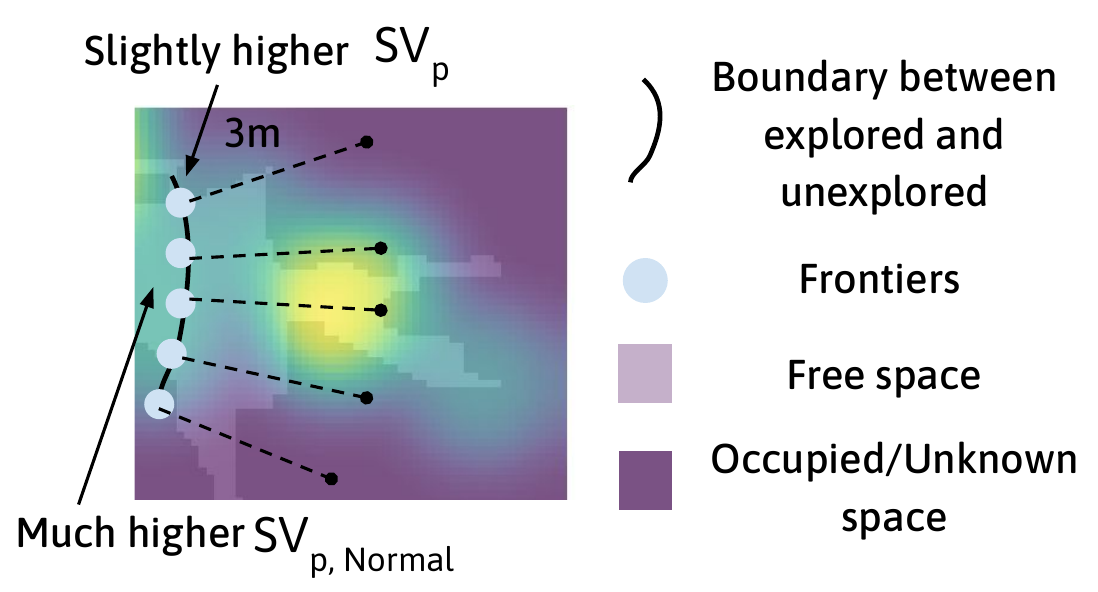}
    \vspace{-8pt}
    \caption{Sampling weight of the frontier $p$ depends on both $\text{SV}_p$ and $\text{SV}_{p,\text{normal}}$.}
    \label{fig:semantic-value}
    \vspace{-5pt}
\end{figure}

Similar to SV combining LSV and GSV in \cref{subsec:semantic-value}, we again apply temperature scaling ($\tau_\text{SV}$ and $\tau_\text{SV, Normal}$) to each of the two values and compute the final weight $w_p$ of the frontier $p$:
\begin{equation}
    \text{w}_p = \exp \left(\tau_\text{SV} \cdot \text{SV}_p + \tau_\text{SV, Normal} \cdot \text{SV}_{p,\text{normal}} \right).
\end{equation}
In practice we use $1$ for both scaling values. Online adaptation of these values can potentially further improve the exploration efficiency. 


\subsection{HM-EQA Dataset}
\smallskip \noindent \textbf{Generating candidate questions and answers with GPT4-V.} We leverage GPT4-V to help generate candidate questions and their answers given multiple views of each scene. GPT4-V is given the instructions with the prompt shown in \cref{fig:gpt4v-prompt}, as well as a set of twelve views randomly sampled inside the scene (\cref{fig:gpt4v-views}). We also include three examples of the questions and answers (manually written) to help GPT4-V generated desired ones (questions and answers shown in \cref{fig:few-shot}, views skipped).

\begin{figure}[h]
\centering\includegraphics[width=0.99\linewidth]{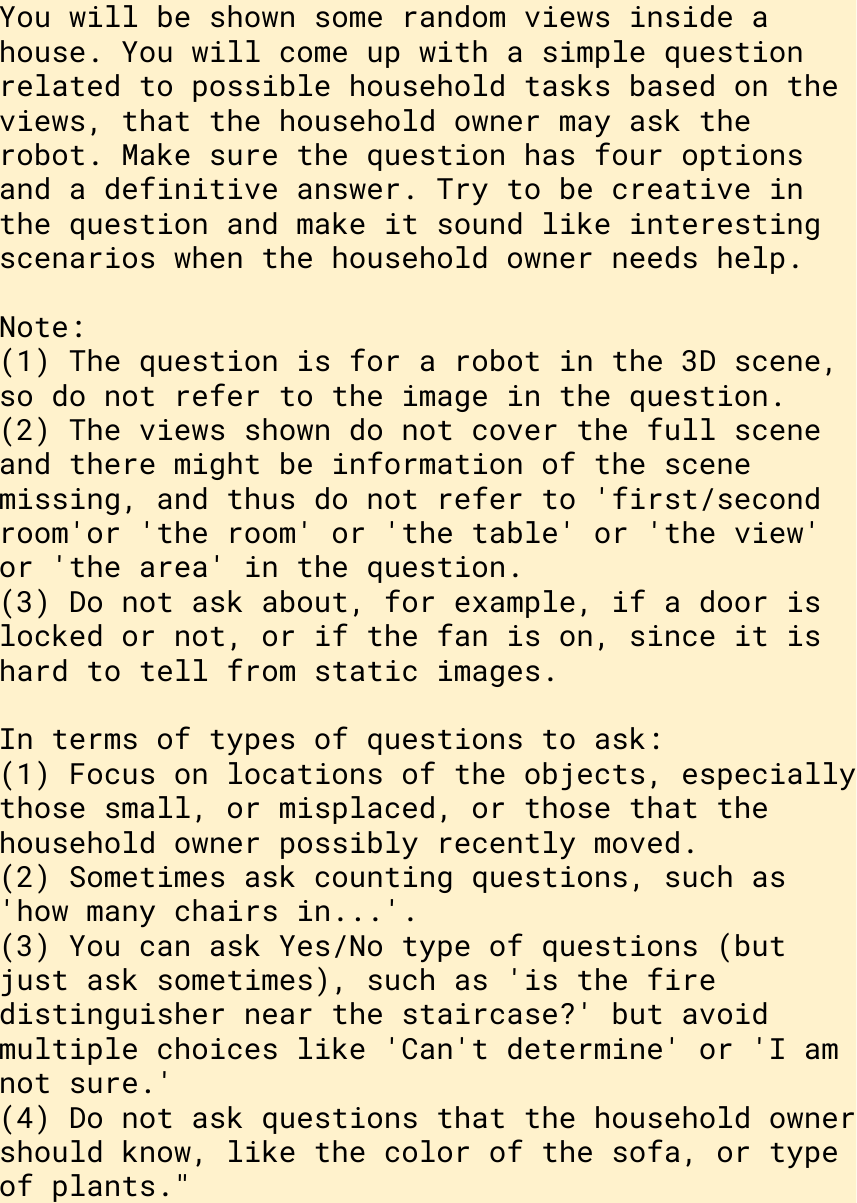}
    \vspace{-8pt}
    \caption{Prompt used when GPT4-V generates candidate questions and their answers given multiple views of the scene.}
    \label{fig:gpt4v-prompt}
    \vspace{-5pt}
\end{figure}

\begin{figure}[h]
\centering\includegraphics[width=0.99\linewidth]{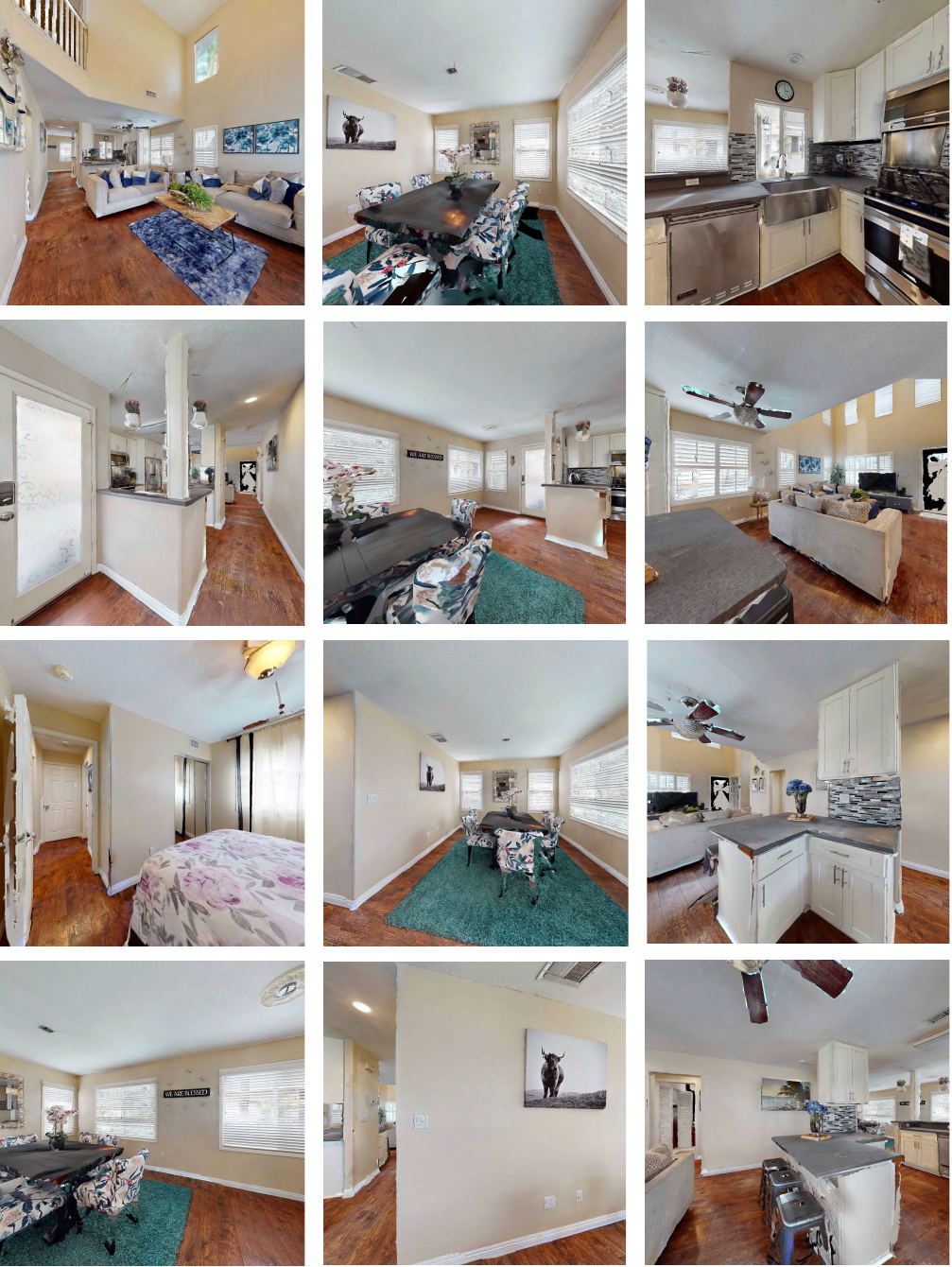}
    \vspace{-8pt}
    \caption{Twelve random views sampled within the scene for prompting GPT4-V to generate candidate questions and their answers. In this case, a question about the third image (first row) is generated by GPT4-V: `where is the wall clock placed in the kitchen? A) Above the sink B) On the refrigerator C) On the cabinet D) Above the oven Answer: A) Above the sink'.}
    \label{fig:gpt4v-views}
    \vspace{-5pt}
\end{figure}

\begin{figure}[h]
\centering\includegraphics[width=0.99\linewidth]{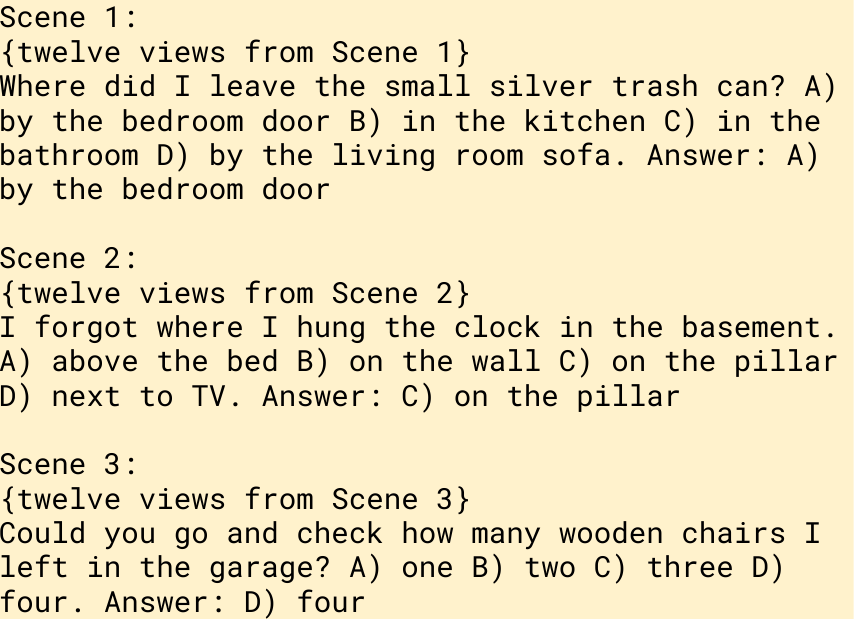}
    \vspace{-8pt}
    \caption{Three example question and answer pairs for GPT4-V. Actual views of the scenes not shown.}
    \label{fig:few-shot}
    \vspace{-5pt}
\end{figure}


\section{Additional experiment results}
\label{app:additional-results}

\smallskip \noindent \textbf{EQA results using other VLMs.} The exploration and question answering performance of our method scales with the capabilities of the VLMs. We believe that VLMs have improved significantly and will keep improving over time. For example, there have been new advances in open-source VLMs such as LLaVA 1.6 improving significantly across benchmarks compared to previous ones. We run our exploration experiments with LLaVA 1.6 without fine-tuning, and we found the success rate using the maximum normalized time step improves to 65.3\% from 58.4\% (\cref{fig:exploration-comparison} left) achieved in the main experiments. In addition, the official version of state-of-the-art GPT4-V provides access to the log probabilities of its output, which was not available with the preview version. We run our exploration experiments with GPT4-V without fine-tuning, and we found the success rate improves to 73.9\% from 58.4\%. These results highlight the better question answering and exploration capabilities of the VLMs. Nonetheless, we believe we have not fully exploited the strong semantic reasoning capabilities of the VLMs, and there remains challenges in how to better incorporate, for example, multiple views of the scene for VLM reasoning and question answering in future work.

\smallskip \noindent \textbf{Miscalibrated question-image relevance score.} 
\cref{sec:stopping-criterion} introduces the question-image relevance score $\text{Rel}(x^t)$ \cref{eq:relevance} as a possible way of determining when the robot deems the current view sufficient for answering the question and stops exploration. However, from experiments we find this score very miscalibrated. \cref{fig:top-relevancy} shows the histogram of the highest relevance score within each scenarios (run with maximum time steps), where the answer prediction at the time step with the score is correct. Even though the question is answered correctly with the highest relevance score, the distribution of the scores is very wide --- majority of the successful scenarios has the top relevance  score lower than 0.5. Ideally the score should be high. This means the raw score alone cannot be used as a reliable indicator as the confidence for answering the question, thus motivating using multi-step conformal prediction (\cref{sec:multi-step CP}) to rigorously quantify the uncertainty instead.

\begin{figure}[h]
\centering\includegraphics[width=0.70\linewidth]{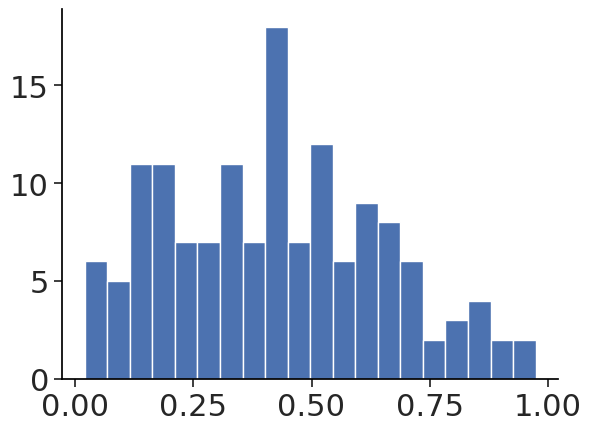}
    \vspace{-3pt}
    \caption{Histogram of the highest relevance score over steps from each successful scenario. Many scenarios have the top score lower than 0.5, and thus the raw score does not well reflect the confidence for answering the question.}
    \label{fig:top-relevancy}
    \vspace{-5pt}
\end{figure}

\begin{figure*}[t]
\begin{center}
\includegraphics[width=0.99\textwidth]{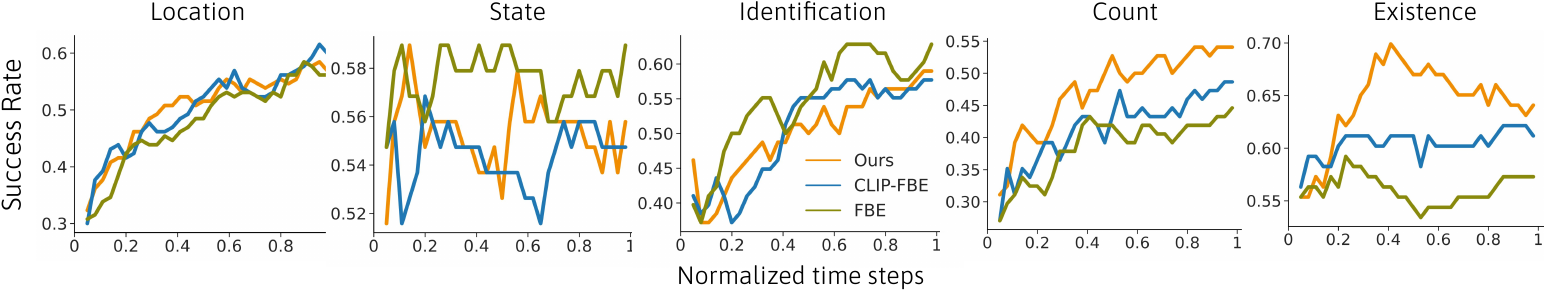}
\caption{Normalized time step taken vs. success rate in simulated experiments for comparing different exploration methods, in each of the five question categories: Location, State, Identification, Count, Existence. Our active exploration method shows the most improvement in Count and Existence, where the question provides a reasonable amount of information about the location of interest.}
\label{fig:category-comparison}
\vspace{-15pt}
\end{center}
\end{figure*}

\newpage
\smallskip \noindent \textbf{Semantic exploration results for each question category.} In \cref{fig:exploration-comparison}a we have shown the comparison between our method and two other exploration baselines (CLIP-FBE and FBE) in simulated experiments, using all scenarios from the HM-EQA dataset. Here in \cref{fig:category-comparison} we show the results for each of the five question categories (Location, State, Identification, Count, Existence) separately. We find our method shows the most improvement over baselines in Count and Existence. We believe the reason is that in these two type of questions, the question itself provides a reasonable amount of information about the location of interest (\eg `how many stools are there at the kitchen counter?' and `are there some towels in the bathroom'), and such information can be used by VLM to indicate possible exploration directions. In contrast, Location questions do not have such information (\eg `where is the piano?'), and the robot needs to explore most areas of the scene for answering them anyway. We also find State questions more difficult than the other categories, as the success rates do not improve with more time steps (\eg `Is the living room air conditioning turned on?'), since it tends to involve small objects that are difficult to see or require very close views. We also note our method does not show improvement for Identification questions --- the difference of these questions from Count and Existence is that, the questions only mention the \emph{object} of interest (\eg the piano) but not the \emph{location} of interest (\eg the bedroom), and this means identifying relevant exploration directions for them requires a higher degree of semantic reasoning of the VLM. We hope further improvement of the VLM can help better solve these scenarios.  

\smallskip \noindent \textbf{Semantic exploration results for varying scene sizes.} In \cref{fig:scene-size} we examine the success rate (using the maximum normalized time steps) achieved by different methods in scenes grouped by their sizes from \SI{150}{\metre\squared} to \SI{750}{\metre\squared}. The figure shows the success rate is roughly invariant to the scene size. This shows that our time step normalization is reasonable, and frontier-based exploration is essential to the different methods.

\begin{figure}[h]
\centering\includegraphics[width=0.70\linewidth]{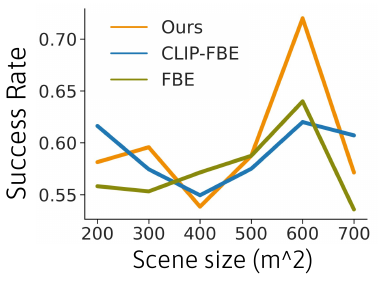}
    \vspace{-3pt}
    \caption{Scene size (in square meter) vs. success rate in simulated experiments for comparing different exploration methods. The scene size is binned with intervals of \SI{100}{\metre\squared}. The large spike at \SI{600}{\metre\squared} is caused by the relatively small number of scenarios in scenes around \SI{600}{\metre\squared}.}
    \label{fig:scene-size}
    \vspace{-5pt}
\end{figure}

\smallskip \noindent \textbf{Improving semantic exploration by considering view orientation.} View orientation can play an important role in determining the semantic value of a location in the scene, \eg the object on the sofa can be only seen from the front but not from the rear due to occlusion. Our proposed setup in \cref{subsec:semantic-value} does not consider this aspect; originally we alleviate the issue by restricting the robot to move in smaller steps such that one semantically relevant location might be seen by the robot from multiple view orientations (\cref{subsec:map}). In an additional experiment, we save the view orientation from individual observations in the semantic map, and apply bias to such orientation when sampling the next frontier to navigate to. We find this heuristic slightly improves the performance from 58.4\% success rate to 59.3\%. Future work could add additional dimensions to the semantic map corresponding to discretized view angles similar to affordance maps used in table-top manipulation tasks \cite{zeng2020tossingbot}.
\end{appendices}


\end{document}